\newtheorem{assumption}{Assumption}
\newtheorem{lemma}{Lemma}
\newtheorem{proposition}{Proposition}
\newtheorem{theorem}{Theorem}
\newtheorem{corollary}{Corollary}
\DeclareMathOperator{\avg}{avg}
\def\bx{\bm{x}}
\def\by{\bm{y}}
\def\bw{\bm{w}}
\def\bg{\bm{g}}
\def\bG{\bm{G}}
\def\bxi{\bm{\xi}}
\def\E{\mathbb{E}}
\def\mv{~\middle\vert~}
\def\mv{~\middle\vert~}
\title{Dual-Delayed Asynchronous SGD for Arbitrarily Heterogeneous Data}
\author{Xiaolu Wang\thanks{Department of Electronic and Computer Engineering, The Hong Kong University of Science and Technology (Emails: xwangcu@gmail.com, yuchang.sun@connect.ust.hk, eejzhang@.ust.hk).} \and Yuchang Sun$^\ast$ \and Hoi-To Wai\thanks{Department of Systems Engineering and Engineering Management, The Chinese University of Hong Kong (Email: htwai@se.cuhk.edu.hk).} \and Jun Zhang$^\ast$}
\date{}
\begin{document}
	
	\maketitle
	
	\begin{abstract}
		We consider the distributed learning problem with data dispersed across multiple workers under the orchestration of a central server. Asynchronous Stochastic Gradient Descent (SGD) has been widely explored in such a setting to reduce the synchronization overhead associated with parallelization. However, the performance of asynchronous SGD algorithms often depends on a bounded dissimilarity condition among the workers' local data, a condition that can drastically affect their efficiency when the workers' data are highly heterogeneous.
		To overcome this limitation, we introduce the \textit{dual-delayed asynchronous SGD (DuDe-ASGD)} algorithm designed to neutralize the adverse effects of data heterogeneity. DuDe-ASGD makes full use of stale stochastic gradients from all workers during asynchronous training, leading to two distinct time lags in the model parameters and data samples utilized in the server's iterations. Furthermore, by adopting an incremental aggregation strategy, DuDe-ASGD maintains a per-iteration computational cost that is on par with traditional asynchronous SGD algorithms.
		Our analysis demonstrates that DuDe-ASGD achieves a near-minimax-optimal convergence rate for smooth nonconvex problems, even when the data across workers are extremely heterogeneous. Numerical experiments indicate that DuDe-ASGD compares favorably with existing asynchronous and synchronous SGD-based algorithms. 
	\end{abstract}
	
	\section{Introduction}
	\vspace{-0mm}
	In traditional machine learning, training often occurs on a single machine. This approach can be restrictive when handling large datasets or complex models that demand substantial computational resources. Distributed machine learning overcomes this constraint by utilizing multiple machines working in parallel. This method distributes the computational workload and data across several nodes or workers, enabling faster and more scalable training.
	
	We focus on the most common distributed machine learning paradigm, known as the \textit{data parallelism} approach. In this setup, the training data are distributed among multiple workers, with each worker independently conducting computations on its local data.
	As an extension of stochastic gradient descent (SGD) used on a single machine, \textit{synchronous SGD} \citep{cotter2011better,dekel2012optimal,chen2016revisiting,goyal2017accurate} stands as a prominent example of data-parallel training algorithms. In synchronous SGD, the server broadcasts the latest model to all workers, who then simultaneously compute stochastic gradients using their respective datasets. After local computation, these workers send their stochastic gradients back to the central server. The server then aggregates these stochastic gradients and updates the global model accordingly.
	
	However, variations in computation speeds and communication bandwidths across workers, typically due to differences in hardware, are common. In synchronous SGD, this disparity forces all workers to wait for the slowest one to complete its computations before proceeding to the next iteration. This issue, often referred to as the \textit{straggler effect}, leads to significant idle times, severely limiting the efficiency and scalability of the approach.
	To address this problem, {\it asynchronous SGD (ASGD)} algorithms have been extensively studied to mitigate the synchronization overhead among workers. Since nodes operate independently, each can proceed at its own pace without waiting for others. This attribute is especially beneficial in ad-hoc clusters or cloud environments where hardware heterogeneity is prevalent \citep{assran2020advances}.
	
	The primary challenge faced by asynchronous training is that its efficiency can be compromised by \textit{data heterogeneity}. This issue arises because fast workers are able to send more frequent updates to the server, while slower workers contribute less frequently. Consequently, the training process may become biased, as the data from fast and slow workers are not equally represented in the server's model updates. 
	Recent research efforts have addressed the problem of data heterogeneity in ASGD \citep{gao2021provable, mishchenko2022asynchronous, koloskova2022sharper,islamov2024asgrad}. These studies focus on the convergence properties of ASGD under conditions where the dissimilarity of local objective functions is bounded. However, if the local datasets are highly heterogeneous, leading to significant differences in local objective functions, then the convergence performance of these algorithms can be substantially reduced.
	
	\subsection{Our Contributions}\label{sec:contribution}
	\vspace{-0mm}
	In this paper, we tackle the above limitations in existing ASGD algorithms. Our main contributions are summarized as follows:
	\begin{enumerate}[label=\arabic*),leftmargin=14pt]
		\item We propose the dual-delayed ASGD (DuDe-ASGD) algorithm for distributed training, with the following key features:
		\begin{enumerate}[label=$\bullet$,leftmargin=12pt]
			\item DuDe-ASGD aggregates the stochastic gradients from \textit{all} workers, which are computed based on both stale models and stale data samples. This leads to a \textit{dual-delayed} aggregated gradient at the server, contrasting sharply with traditional ASGD algorithms that use stale models but fresh data samples for each iteration.
			\item DuDe-ASGD operates in a \textit{fully asynchronous} manner, meaning that the server updates the global model as soon as it receives a stochastic gradient from any worker, without the need to wait for others.  
			Additionally, DuDe-ASGD can be readily adapted to \textit{semi-asynchronous} implementations, which allows it to balance the advantages of both synchronous and asynchronous training methods, demonstrating its high flexibility.
			\item Although DuDe-ASGD requires aggregation of stochastic gradients from all workers in every iteration, it can be implemented incrementally by storing each worker's latest stochastic gradient, maintaining a per-iteration computational cost comparable to traditional ASGD algorithms.
		\end{enumerate}
		
		\item Through a careful analysis accounting for the time lags inherent in the dual-delayed system, we demonstrate that DuDe-ASGD achieves a near-minimax-optimal convergence rate for solving general nonconvex problems under mild assumptions. Our theoretical results do not depend on bounded function dissimilarity conditions, indicating that DuDe-ASGD can achieve rapid and consistent convergence on arbitrarily heterogeneous data.
		
		\item We conduct experiments comparing DuDe-ASGD with other ASGD and aggregation-based algorithms in training deep neural networks on the CIFAR-10 dataset. We show that DuDe-ASGD delivers competitive runtime performance relative to asynchronous and synchronous SGD-based algorithms, validating its effectiveness and efficiency in practical applications.
	\end{enumerate}
	
	\section{Problem Setup and Prior Art}
	\vspace{-0mm}
	Distributed machine learning involving $n$ workers and a server can be described by the following stochastic optimization problem:
	\begin{align}
		\min_{\bw\in\mathbb{R}^p}~
		& F(\bw) \coloneqq \frac{1}{n}\sum_{i=1}^{n}F_i(\bw),~~\text{where}~~F_i(\bw) := \mathbb{E}_{\bxi_i \sim \mathbb{P}_i} \left[ f_i (\bw; \bxi_i) \right].
		\label{eq:opt}
	\end{align}
	Here, $p$ denotes the dimension of the model parameters, and $\bxi_i$ is a data sample from worker $i$, following a probability distribution $\mathbb{P}_i$. Each local loss function $f_i(\cdot; \bxi_i)$, defined for $i \in [n]$ and $\bxi_i \in \Xi_i$, is continuously differentiable and accessible to worker $i$.
	Problem \eqref{eq:opt} shall be solved collaboratively by $n$ workers under the coordination of a central server. 
	Our focus is on scenarios with data heterogeneity, where the local distributions $\mathbb{P}_i$ differ significantly from one another. This setting is particularly relevant in contexts such as data-parallel distributed training \citep{verbraeken2020survey} and horizontal federated learning \citep{yang2019federated}. 
	
	In vanilla ASGD \citep{nedic2001distributed,agarwal2011distributed}, every computed stochastic gradient at a worker triggers a global update at the server, which results in the following iteration formula:
	\begin{align}
		\bw^t = \bw^{t-1} - \eta \nabla f_{j_t} (\bw^{t-\tau_{j_t}(t)};\bxi_{j_t}^t), ~t = 1, 2, \dots,
		\label{eq:asgd}
	\end{align}
	where $j_t \in [n]$ denotes the index of the worker that contributes to the server's iteration $t$ and $\tau_i(t) \in [1,t]$ represents the delay of the model used to compute the stochastic gradient by worker $i$ at server iteration $t$. The updated model, $\bw^t$, is then transmitted back to worker $j_t$ for subsequent local computations. It is important to note that $\bxi_i^t \sim \mathbb{P}_i$ is indexed by $t$ to indicate that this particular data sample has not been utilized by the server prior to iteration $t$. 
	
	The iterative process (\ref{eq:asgd}) allows faster workers to contribute more frequently to the server's model updates. 
	However, when dealing with data heterogeneity where $F_i$ are different, the stochastic gradient $\nabla f_{j_t} (\bw^{t-\tau_{j_t}(t)};\bxi_i^t)$ can significantly deviate from $\nabla F(\bw^t)$ on average, which can impede the model's convergence. 
	To be more specific, we assume that $j_t$ follows some distribution $\{p_1,\dots,p_n\}$ over $[n]$, where $p_i$ is the probability that $j_t = i$ for $i \in [n]$. 
	Even in scenarios without delays in the iterations, i.e., $\tau_i(t) = 1$ for all $i \in [n]$, the stochastic gradient remains a \textit{biased estimate} of the exact gradient:
	\begin{align}
		\mathbb{E} \left[ \nabla f_{j_t} (\bw^{t-1};\bxi_{j_t}^t) \right] = \sum_{i=1}^n p_i \nabla F_i( \bw^{t-1} ) \neq \nabla F ( \bw^{t-1} ).
		\nonumber
	\end{align}
	The convergence analysis of vanilla ASGD on heterogeneous data has been attempted by \citep{mishchenko2022asynchronous}. As reported in Table \ref{tab:complexity}, vanilla ASGD \emph{may not converge to a stationary point of \eqref{eq:opt}} and the asymptotic bias is proportional to the level of data heterogeneity.
	
	To address the disparity between fast and slow workers, \cite{koloskova2022sharper} integrates a random worker scheduling scheme within the ASGD framework. In this approach, after executing iteration \eqref{eq:asgd}, the server sends the updated model $\bw^t$ to a worker sampled from the set of all workers \textit{uniformly at random}. This method promotes more uniform contribution of workers and ensures the convergence of the iterates to a stationary point of Problem \eqref{eq:opt}, achieving the best-known convergence rate for ASGD on heterogeneous data.
	However, as data heterogeneity increases, the convergence rate is adversely affected, as detailed in Table \ref{tab:complexity}. Additionally, there is a potential issue with this scheduling method: a worker may be chosen multiple times consecutively before it completes its current tasks, leading to a backlog of models in the worker's buffer. This accumulation can reduce the overall efficiency of the algorithm, as workers may struggle to process a queue of pending models.
	In contrast to strategies that employ uniformly random worker sampling, \cite{leconte2024queuing} introduces a non-uniform worker sampling scheme in ASGD to balance the accumulation of queued tasks among both fast and slow workers. The analysis involves specific assumptions about the processing time distributions, which facilitate the accurate determination of the stationary distribution of the number of tasks currently being processed.
	Additionally, \cite{islamov2024asgrad} has proposed the \textit{Shuffled ASGD}, which shuffles the sampling order of workers after a specified number of iterations. This approach aims to further enhance the fairness and efficiency of task distribution, ensuring that no single worker consistently benefits or suffers from its position in the sampling sequence.
	Nevertheless, these state-of-the-art ASGD methods all require the dissimilarity among local functions $F_i$ to be bounded. Their performance tends to deteriorate in the presence of high data heterogeneity.
	Further discussion on other works related to asynchronous training methods can be found in Appendix \ref{sec:related}.
	
	\begin{algorithm}[t]
		\caption{DuDe-ASGD (fully asynchronous version without mini-batching)}
		\label{algo1}
		\begin{algorithmic}[1] 
			\STATE \textbf{Input:} $n, T \in \mathbb{Z}_{++}$, $\eta > 0$, $\bw^0 \in \mathbb{R}^p$
			\STATE \textbf{Initialization:} For worker $i \in [n]$, it computes $\nabla f_i(\bw^0,\bxi_i^1)$, stores it in the worker's buffer $\widetilde{\bG}_i$, and sends it to the server. The server computes $\bg^1 = \frac{1}{n}\sum_{i=1}^{n} \widetilde{\bG}_i$ and $\bw^1 = \bw^0 - \eta \bg^1$, store them in the server's buffers $\widetilde{\bg}$ and $\widetilde{\bw}$, and broadcast $\bw^1$ to all workers
			\FOR{$t=2,3,\dots,T$}
			\STATE Once some worker $j_t$ finishes computing $\bG_{j_t}^t \coloneqq \nabla f_{j_t} ( \bw^{t-\tau_{j_t}(t)}; \bxi_{j_t}^t )$, it sends $\bm{\delta}^t \coloneqq \bG_{j_t}^t - \widetilde{\bG}_{j_t}$ to the server and update its local buffer $\widetilde{\bG}_{j_t} \leftarrow \bG_{j_t}$
			\STATE The server computes the aggregated gradient as
			$
			\bg^t = \widetilde{\bg} + \bm{\delta}^t / n
			$
			and updates the server's buffer $\widetilde{\bg} \leftarrow \bg^t$
			\STATE The server computes the new model as $\bw^t = \widetilde{\bw} - \eta \bg^t$, sends $\bw^t$ to worker $j_t$, and updates the server's buffer $\widetilde{\bw} \leftarrow \bw^t$
			\ENDFOR
			\STATE \textbf{Output:} $\bw^t$, where $t$ selected uniformly random from $[T]$
		\end{algorithmic}
	\end{algorithm}
	
	\section{Dual-Delayed ASGD (DuDe-ASGD)}\label{sec:algorithm}
	\vspace{-0mm}
	Given the challenges of managing data heterogeneity while ensuring rapid convergence in ASGD, we introduce the Dual-Delayed ASGD (DuDe-ASGD). This method enhances updates by employing the \textit{full aggregation} technique, which utilizes not just the stochastic gradient from a single worker, but also stale stochastic gradients from all other workers. The update formula for DuDe-ASGD is
	\begin{align}\label{eq:iter}
		\bw^t = \bw^{t-1} - \eta \underbrace{\frac{1}{n} \sum_{i=1}^{n} \nabla f_i ( \bw^{t-\tau_i(t)}; \bxi_i^{t-d_i(t)} )}_{\bg^t}, ~t = 1, 2, \dots,
	\end{align}
	where $d_i(t) \in [0, t-1]$ is the delay of the data sample that worker $i$ used to compute the stochastic gradient in server's iteration $t$, and $\bg^t$ is the aggregated gradient that involves the most recent computation results of all workers. 
	Unlike the traditional ASGD \eqref{eq:asgd} where only the model experiences delay while the data sample remains current, DuDe-ASGD involves two distinct types of delays---in both the models and the data samples---in the updates, hence termed \textit{dual-delayed}.
	
	The dual-delay property emerges not from intentional design, but as a natural consequence of integrating asynchronous training with full aggregation.
	We initialize $\tau_i(1) = 1$ and $d_i(1) = 0$ for all $i \in [n]$, then the evolution of the delays associated with the data samples is described by:
	\begin{align}
		d_i(t) = 
		\begin{cases}
			0, &\text{if}~i = j_t
			\\
			d_i(t-1) + 1, &\text{if}~i \neq j_t
		\end{cases},
		~t = 2, 3, \dots.
		\nonumber
	\end{align}
	Notably, the aggregated gradient $\bg^t$ utilized by DuDe-ASGD allows for incremental updates:
	\begin{align}
		\bg^t = \bg^{t-1} - \frac{1}{n} \nabla f_{j_t} ( \bw^{t-1-\tau_{j_t}(t-1)}; \bxi_{j_t}^{t-1-d_{j_t}(t-1)} ) + \frac{1}{n} \nabla f_{j_t} ( \bw^{t-\tau_{j_t}(t)}; \bxi_{j_t}^{t} ),
		\nonumber
	\end{align}
	whose computational cost per iteration is independent of the number of workers $n$. This efficiency is achieved by maintaining a record of the latest stochastic gradients from all $n$ workers, so that the per-iteration computational complexity of DuDe-ASGD aligns with that of traditional ASGD.
	For every newly computed stochastic gradient, the model on which they are computed can be delayed while the data is freshly sampled, ensuring that the delays in the models always surpass those in the data samples within the aggregated gradient $\bg^t$, i.e., for all $i \in [n]$ and $t\geq 1$,
	\begin{align}\label{eq:tau-d}
		\tau_i(t) \geq d_i(t) + 1.
	\end{align}
	The training procedures for DuDe-ASGD are described in Algorithm \ref{algo1}.
	The distinctions between traditional ASGD and DuDe-ASGD during a single communication round are illustrated in Figure \ref{fig:asgd-illustration}.
	In Algorithm \ref{algo1}, each worker computes a stochastic gradient using just one data sample at a time. 
	However, to better balance the stochastic gradient noise and per-iteration computation/memory costs, it is advantageous to employ multiple samples simultaneously. 
	With a slight abuse of notation, we define $\nabla f_i ( \bw^{t-\tau_i(t)}; \mathcal{D}_i^{t-d_i(t)} )
	\coloneqq
	\frac{1}{b_i} \sum_{k=1}^{b_i} \nabla f_i ( \bw^{t-\tau_i(t)}; \bxi_{i,k}^{t-d_i(t)} ),$
	where $\mathcal{D}_i$ is a set of data samples independently drawn from $\mathbb{P}_i$ with batch size $b_i \geq 1$.
	Using $\bg^t = \frac{1}{n} \sum_{i=1}^{n} \nabla f_i ( \bw^{t-\tau_i(t)}; \mathcal{D}_i^{t-d_i(t)} )$
	as the aggregated stochastic gradient in iteration \eqref{eq:iter} yields the \textit{mini-batch} version of DuDe-ASGD that is useful in practice.
	
	\textbf{Semi-Asynchronous Variant.}
	To balance the strengths and weaknesses of asynchronous and synchronous strategies, adopting ``hybrid'' approaches can be beneficial.
	On one side, fully synchronous algorithms, such as synchronous SGD, can be significantly hindered by stragglers during the training process.
	On the other side, in fully asynchronous algorithms, such as the one detailed in Algorithm \ref{algo1}, each stochastic gradient computation immediately triggers a global update at the server. While this method can reduce wait times and potentially increase throughput, it may also result in high levels of staleness in the models and the data samples within $\bg^t$, which may adversely affect the overall time efficiency.
	Considering these factors, DuDe-ASGD can be adapted to a \textit{semi-asynchronous variant}, in which the server waits for stochastic gradients from multiple workers before performing a new update.
	Specifically, we define $\mathcal{C}_t \coloneqq \{i: d_i(t) = 0\}$ as the set of workers that contribute to the server's iteration $t$. 
	The semi-asynchronous variant, incorporating the mini-batch implementation, then updates the model using the aggregated stochastic gradient that is computed in the following manner:
	\begin{align}
		\bg^t 
		&= \bg^{t-1} - \frac{1}{n} \sum_{i \in \mathcal{C}_t} \nabla f_i ( \bw^{t-1-\tau_i(t-1)}; \mathcal{D}_i^{t-1-d_i(t-1)} ) 
		+ \frac{1}{n} \sum_{i \in \mathcal{C}_t} \nabla f_i ( \bw^{t-\tau_i(t)}; \mathcal{D}_i^{t} ).
		\nonumber
	\end{align}
	In particular, when $\tau_i(t) = d_i(t)+1$ for all $i \in [n]$, then $\bg^t = \frac{1}{n} \sum_{i=1}^{n} \nabla f_i ( \bw^{t-\tau_i(t)}; \bxi_i^{t-\tau_i(t)+1} )$, which aligns with the approaches used in MIFA \citep{gu2021fast} (without multiple local updates) and sIAG \citep{wang2023linear}. 
	These two algorithms differ essentially from DuDe-ASGD in that the delays associated with the model parameters and data samples are synchronized, categorizing them as synchronous algorithms. 
	Moreover, if $\tau_i(t) = 1$ for all $i \in [n]$, then DuDe-ASGD becomes equivalent to synchronous SGD.
	
	\begin{figure}
		\centering
		\begin{subfigure}[b]{0.45\textwidth}
			\centering
			\includegraphics[width=\textwidth]{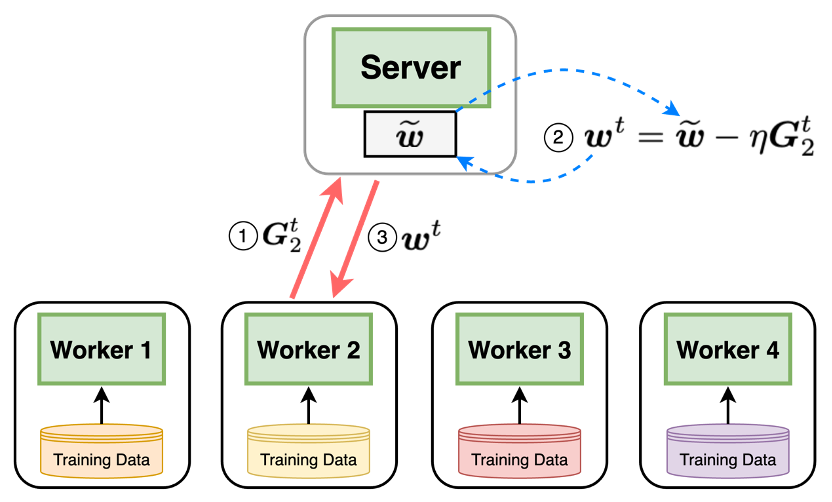}
			\caption{Traditional ASGD}
			\label{fig:asgd}
		\end{subfigure}
		\hspace{6mm}
		\begin{subfigure}[b]{0.45\textwidth}
			\centering
			\includegraphics[width=\textwidth]{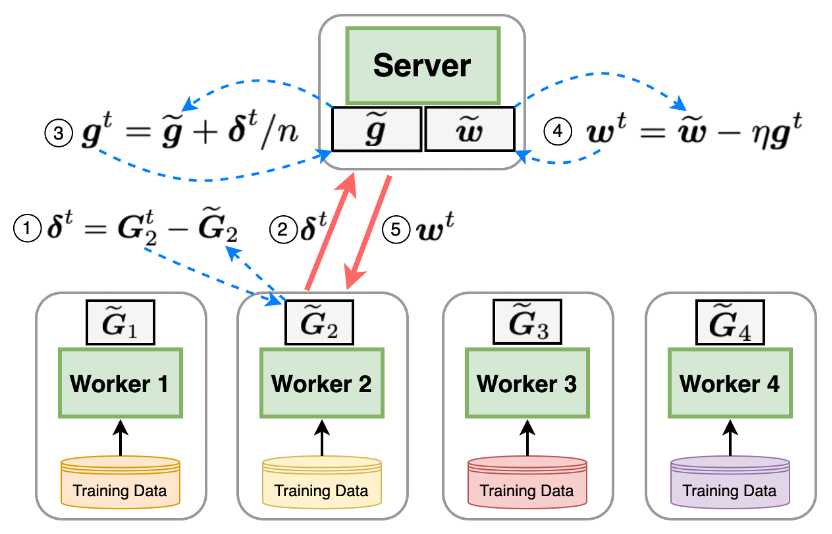}
			\caption{DuDe-ASGD}
			\label{fig:asgd2}
		\end{subfigure}
		\caption{Illustration of traditional ASGD and the proposed DuDe-ASGD. Suppose that worker 2 contributes to the server's model update in iteration $t$. In traditional ASGD algorithms, each worker directly sends the freshly computed stochastic gradient ${\bG}^t_2 = \nabla f_2 (\bw^{t-\tau_2(t)};\bxi_2^t)$ to the server. While in DuDe-ASGD, each worker maintains a memory of the most recently evaluated stochastic gradient $\widetilde{\bG}_2$ and sends the gradient difference $\bm{\delta}^t = {\bG}^t_2 - \widetilde{\bG}_2$.}
		\label{fig:asgd-illustration}
	\end{figure}
	
	\section{Theoretical Analysis}\label{sec:theory}
	\vspace{-0mm}
	This section delves into the theoretical underpinnings of convergence behaviors of DuDe-ASGD. 
	For clarity in our discussion, we present the convergence analysis of Algorithm \ref{algo1} in its fully asynchronous form. The technical results discussed herein can be readily adapted to the semi-asynchronous variant of DuDe-ASGD, with or without the implementation of mini-batching.
	
	To proceed, we introduce the following standard assumptions that are essential in our analysis.
	\begin{assumption}\label{as:lowerbound}
		There exists $F^* > -\infty$ such that (s.t.) $F(\bw) \geq F^*$ for all $\bw \in \mathbb{R}^p$.
	\end{assumption}
	
	\begin{assumption}\label{as:function}
		$F$ is $L$-smooth, i.e., $F$ is continuously differentiable and there exists $L \geq 0$ s.t.
		\begin{align}
			\| \nabla F(\bw) - \nabla F(\bw') \|_2 \leq L \| \bw - \bw' \|_2,~~\forall~\bw,\bw' \in \mathbb{R}^p.
			\nonumber
		\end{align}
	\end{assumption}
	
	\begin{assumption}\label{as:unbiased}
		Let $\bw^r$ and $\bw^s$ with $s,r \geq 0$ be iterates generated by Algorithm \ref{algo1}, $\bxi_i^t \in \Xi_i$ with $i \in [n]$ and $t \geq 1$ be a data sample drawn from $\mathbb{P}_i$, and ${\cal F}_s$ be the sigma algebra generated by $\bw^1,\dots,\bw^s$. 
		If $r \leq s< t$, then
		\begin{align}
			\E \left[ \nabla f_i(\bw^r;\bxi_i^t) \mid {\cal F}_s \right] = \nabla F_i(\bw^r).
			\label{eq:unbiased}
		\end{align}
	\end{assumption}
	Assumption \ref{as:unbiased} specifies that the stochastic gradient estimate is unbiased. It is important to note that the iterate $\bw^r$ may depend on $\bxi_i^t$ for different times $r$ and $t$, rendering $\nabla f_i (\bw^r;\bxi_i^t)$ a potentially biased estimate of $\nabla F_i(\bw^r)$. To maintain the unbiasedness as defined in equation \eqref{eq:unbiased}, it is critical to ensure that $r \leq s < t$. This condition guarantees that ${\cal F}_s$ encompasses all information present in $\bw^r$ and that $\bxi_i^t$ is independent of ${\cal F}_s$. Furthermore, we impose upper bounds on both the conditional variance of stochastic gradients:
	\begin{assumption}\label{as:sigma}
		Let $\bw^r$ and $\bw^s$ with $s,r \geq 0$ be iterates generated by Algorithm \ref{algo1}, $\bxi_i^t \in \Xi_i$ with $i \in [n]$ and $t \geq 1$ be a data sample drawn from $\mathbb{P}_i$, and ${\cal F}_s$ be the sigma algebra generated by $\bw^1,\dots,\bw^s$. 
		If $r \leq s< t$, then there exists a constant $\sigma \geq 0$ s.t.
		\begin{align}
			\E \left[ \| \nabla f_i(\bw^r;\bxi_i^t) - \nabla F_i(\bw^r) \|_2^2 \mid {\cal F}_s \right] \leq \sigma^2.
			\nonumber
		\end{align}
	\end{assumption}
	We further assume that each worker contributes to the server's updates within a bounded number of global iterations, encapsulated by the following assumption regarding the maximum delay of model parameters:
	\begin{assumption}\label{as:delay}
		There exists $\tau_{\max} \geq 1$ s.t. $\tau_i(t) \leq \tau_{\max}$ for all $i \in [n]$ in Algorithm \ref{algo1}.
	\end{assumption}
	When $\tau_{\max} = 1$, indicating that all the models within $\bg^t$ used in iteration \eqref{eq:iter} is up-to-date, Algorithm \ref{algo1} simplifies to synchronous SGD.
	In the context of semi-asynchronous DuDe-ASGD where $|\mathcal{C}_t| = c \in [n]$ for all $t \geq 1$, we use $\tau_{\max}^{(c)}$ to denote the maximum model delay. Then, the relationship between $\tau_{\max}$ and $\tau_{\max}^{(c)}$ is characterized by $\tau_{\max}^{(c)} = \tau_{\max}/c$ \citep[Appendix A]{nguyen2022federated}. This formula signifies that the maximum model delay decreases proportionally to the number of workers the server waits for in each iteration.
	The technical results for fully asynchronous DuDe-ASGD (Algorithm \ref{algo1}) can be readily adapted to the semi-asynchronous variant by substituting $\tau_{\max}$ with $\tau_{\max}^{(c)}$ in our analysis.
	
	\subsection{Convergence Analysis of DuDe-ASGD}\label{sec:rate}
	\vspace{-0mm}
	To study the convergence of DuDe-ASGD, we first observe that under Assumption~\ref{as:function}, the following descent lemma holds:
	\begin{align}
		\E[F(\bw^t)]-\E[F(\bw^{t-1})] 
		\leq&~ \E [ \langle \nabla F(\bw^{t-1}), \bw^t - \bw^{t-1} \rangle ] + \frac{L}{2} \E \| \bw^t - \bw^{t-1} \|_2^2
		\nonumber
		\\
		=& - \eta \E \langle \nabla F(\bw^{t-1}), \bg^t \rangle + \frac{L\eta^2}{2} \E \| \bg^t \|_2^2.
		\label{eq:descent}
	\end{align}
	Intuitively, both $\E \langle \nabla F(\bw^{t-1}), \bg^t \rangle$ and $\E \| \bg^t \|_2^2$ can be regarded as biased estimates of $\| \nabla F(\bw^{t-1}) \|_2^2 \geq 0$. Subsequently, selecting an appropriate step size $\eta$ can make the right-hand side of \eqref{eq:descent} negative, thereby ensuring a sufficient decrease in the expected function value in each iteration. 
	
	However, due to the dual delay properties of the information encapsulated in $\bg^t$, handling the inner product term is not trivial.
	To describe the challenge, note that 
	\[ \textstyle
	\langle \nabla F(\bw^{t-1}), \bg^t \rangle = \frac{1}{n} \sum_{i=1}^{n} \langle \nabla F(\bw^{t-1}), \nabla f_i ( \bw^{t-\tau_i(t)}; \bxi_i^{t-d_i(t)} ) \rangle.
	\]
	We observe that $\bw^{t-1}$ is a function of $\bxi_i^{t-d_i(t)}$ for $i \in [n]$ such that $d_i(t) \geq 1$.
	Consequently, 
	\[
	\E \langle \nabla F(\bw^{t-1}), \nabla f_i ( \bw^{t-\tau_i(t)}; \bxi_i^{t-d_i(t)} ) \rangle
	\neq \E \langle \nabla F(\bw^{t-1}), \nabla F_i ( \bw^{t-\tau_i(t)}) \rangle
	\]
	and we can no longer obtain a simple expression for the expectation of the inner product. \footnote{We remark that some prior studies \citep{lian2018asynchronous,avdiukhin2021federated,zhang2023no,wang2023tackling} involve similar inner product terms in analysis and have treated them with $\E \left\langle \nabla F(\bw^r), \nabla f_i ( \bw^s; \bxi_i^t ) \right\rangle = \E \left\langle \nabla F(\bw^r), \nabla F_i ( \bw^s) \right\rangle$ for $s < t$. This nuanced but crucial issue may not have been adequately emphasized in these works.} To address this challenge, our idea is to decompose the inner product into two terms:
	\begin{align}
		\langle \nabla F(\bw^{t-1}), \bg^t \rangle
		= \langle \nabla F(\bw^{[t-\tau_{\max}]_+}), \bg^t \rangle
		+ \langle \nabla F (\bw^{t-1}) - \nabla F(\bw^{[t-\tau_{\max}]_+}), \bg^t \rangle,
		\label{eq:decomp}
	\end{align}
	where $[x]_+ \coloneqq \max\{x,0\}$ for $x \in \mathbb{R}$.
	Then utilizing Assumption \ref{as:delay} and considering the expectation conditioned on the most outdated model, we have the desired property:
	\[
	\textstyle \E \left\langle \nabla F(\bw^{[t-\tau_{\max}]_+}), \bg^t \right\rangle
	= \frac{1}{n} \sum_{i=1}^n \E \langle \nabla F(\bw^{[t-\tau_{\max}]_+}) , \nabla F_i ( \bw^{t-\tau_i(t)} ) \rangle
	\]
	Through carefully controlling the second error term, we arrive at the following lower bound on \eqref{eq:decomp}:
	
	\begin{proposition}\label{prop:cc}
		Suppose that Assumptions \ref{as:function}--\ref{as:delay} hold. If the stepsize satisfies $\eta \leq \frac{1}{16 L \tau_{\max}}$,
		then it holds for all $t \geq 1$ that
		\begin{align}
			&\E \langle \nabla F(\bw^{t-1}), \bg^t \rangle
			\geq \frac{1}{8} \E \| \nabla F (\bw^{t-1}) \|_2^2 
			- 2 L \tau_{\max} \eta \sum_{s=1+[t-\tau_{\max}]_+}^{t} \E \| \nabla F(\bw^{s-1}) \|_2^2 
			\nonumber
			\\
			&\qquad\qquad\quad 
			- 3 L \tau_{\max} \eta \frac{\sigma^2}{n}
			- 6 L^2 \tau_{\max} \eta^2 \sum_{s=1+[t-\tau_{\max}]_+}^{t} \E \left\| \frac{1}{n} \sum_{j=1}^{n} \nabla F_j (\bw^{s-\tau_j(s)}) \right\|_2^2.
			\nonumber
		\end{align}
	\end{proposition}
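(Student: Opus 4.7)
The plan is to follow the decomposition~\eqref{eq:decomp} and treat its two inner products by separate mechanisms. Write $t_0 \coloneqq [t-\tau_{\max}]_+$ and introduce the deterministic counterpart of $\bg^t$, namely $\bar{\bg}^t \coloneqq \frac{1}{n}\sum_{i=1}^n \nabla F_i(\bw^{t-\tau_i(t)})$. The bound in the proposition is phrased in $\|\nabla F(\bw^{s-1})\|_2^2$, $\|\frac{1}{n}\sum_j \nabla F_j(\bw^{s-\tau_j(s)})\|_2^2$ and $\sigma^2/n$ precisely because these are the three quantities the analysis can cleanly separate: the true gradient at the current iterate, the bias created by the delayed-model averaging, and the across-worker variance.

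The first step is to establish the key identity $\E\langle \nabla F(\bw^{t_0}), \bg^t\rangle = \E\langle \nabla F(\bw^{t_0}), \bar{\bg}^t\rangle$, which is the payoff of the dual-delay construction. For each worker $i$, I choose the conditioning time $s_i \coloneqq t - \tau_i(t)$: Assumption~\ref{as:delay} gives $t_0 \leq s_i$, so $\nabla F(\bw^{t_0})$ is $\mathcal{F}_{s_i}$-measurable; the iterate $\bw^{t-\tau_i(t)}$ is trivially $\mathcal{F}_{s_i}$-measurable; and the dual-delay property~\eqref{eq:tau-d} gives $s_i < t - d_i(t)$, so $\bxi_i^{t-d_i(t)}$ is independent of $\mathcal{F}_{s_i}$. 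Assumption~\ref{as:unbiased} then lets me replace $\nabla f_i(\bw^{t-\tau_i(t)}; \bxi_i^{t-d_i(t)})$ by $\nabla F_i(\bw^{t-\tau_i(t)})$ in each summand, and summing over $i$ produces the identity.

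Combining this identity with \eqref{eq:decomp} gives
\[
\E\langle \nabla F(\bw^{t-1}), \bg^t\rangle = \E\langle \nabla F(\bw^{t-1}), \bar{\bg}^t\rangle + \E\langle \nabla F(\bw^{t-1}) - \nabla F(\bw^{t_0}),\, \bg^t - \bar{\bg}^t\rangle.
\]
For the first summand I would apply the polarization identity $\langle \bu,\bv\rangle = \tfrac12\|\bu\|_2^2 + \tfrac12\|\bv\|_2^2 - \tfrac12\|\bu-\bv\|_2^2$ with $\bu = \nabla F(\bw^{t-1})$, $\bv = \bar{\bg}^t$, extracting $\tfrac12\E\|\nabla F(\bw^{t-1})\|_2^2$ at the cost of a smoothness-type residual $\tfrac12\E\|\nabla F(\bw^{t-1}) - \bar{\bg}^t\|_2^2$. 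For the second summand I would apply Young's inequality with a weight scaled so that the factor in front of $\E\|\bg^t-\bar{\bg}^t\|_2^2 \leq \sigma^2/n$ (which is at most $\sigma^2/n$ because the stochastic noise is independent across workers, making the variance additive) lands at $L\tau_{\max}\eta$, paired with a rescaled $\|\nabla F(\bw^{t-1}) - \nabla F(\bw^{t_0})\|_2^2$ error. Each leftover squared-gradient difference is then controlled by $L$-smoothness, $\|\nabla F(\bw^r) - \nabla F(\bw^{r'})\|_2^2 \leq L^2\|\bw^r-\bw^{r'}\|_2^2$, and the iterate gap by $\|\bw^r - \bw^{r'}\|_2^2 = \eta^2\|\sum_s \bg^s\|_2^2 \leq \eta^2\tau_{\max}\sum_{s=t_0+1}^{t-1}\|\bg^s\|_2^2$ via Cauchy--Schwarz on a telescoping sum of at most $\tau_{\max}$ terms. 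A final variance split $\|\bg^s\|_2^2 \leq 2\|\bar{\bg}^s\|_2^2 + 2\|\bg^s-\bar{\bg}^s\|_2^2$ then converts everything to the quantities on the right-hand side of the proposition; invoking $\eta \leq 1/(16L\tau_{\max})$ shrinks the higher-order cross-terms enough to collapse the leading $\tfrac12\E\|\nabla F(\bw^{t-1})\|_2^2$ into $\tfrac18\E\|\nabla F(\bw^{t-1})\|_2^2$ after absorbing the $\|\bar{\bg}^s\|_2^2$- and $\|\nabla F(\bw^{s-1})\|_2^2$-residuals.

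The main obstacle is the identity in the second paragraph: without the dual-delay property $\tau_i(t) \geq d_i(t)+1$, no single conditioning sigma-algebra could simultaneously make $\bw^{t-\tau_i(t)}$ measurable and $\bxi_i^{t-d_i(t)}$ independent, and the bias-free replacement of $\bg^t$ by $\bar{\bg}^t$ in inner product with $\nabla F(\bw^{t_0})$ would fail---this is precisely the subtle error the paper flags in its footnote about prior analyses. It is also the reason the decomposition in~\eqref{eq:decomp} is engineered to pair the stochastic object $\bg^t$ with the ``old-enough'' gradient $\nabla F(\bw^{t_0})$ rather than with $\nabla F(\bw^{t-1})$. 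Once this identity is secured, the remainder is routine Young-inequality and smoothness bookkeeping, with the only delicate choice being the Young weights that set the $\sigma^2/n$ and $\|\bar{\bg}^s\|_2^2$ coefficients to $L\tau_{\max}\eta$ and $L^2\tau_{\max}\eta^2$, respectively.
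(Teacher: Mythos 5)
Your overall architecture is the paper's: you use the decomposition \eqref{eq:decomp}, and the heart of your argument --- that $\E\langle \nabla F(\bw^{[t-\tau_{\max}]_+}), \bg^t\rangle = \E\langle \nabla F(\bw^{[t-\tau_{\max}]_+}), \bar{\bg}^t\rangle$ with $\bar{\bg}^t \coloneqq \frac{1}{n}\sum_i\nabla F_i(\bw^{t-\tau_i(t)})$, justified by conditioning on a sigma-algebra that makes the delayed iterate measurable while \eqref{eq:tau-d} keeps the sample independent --- is exactly the paper's step $(a)$--$(b)$ in bounding $A$; your per-worker choice $s_i = t-\tau_i(t)$ is in fact a cleaner way to invoke Assumption \ref{as:unbiased} than the paper's joint conditioning. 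The polarization identity on $\E\langle\nabla F(\bw^{t-1}),\bar{\bg}^t\rangle$ and the smoothness-plus-telescoping control of iterate gaps also mirror the paper's treatment of $A_1,A_2$ via Lemma \ref{lem:diff}.

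The gap is in your handling of the second summand $\E\langle\nabla F(\bw^{t-1})-\nabla F(\bw^{[t-\tau_{\max}]_+}),\,\bg^t-\bar{\bg}^t\rangle$. You claim a Young split can set the coefficient of $\sigma^2/n$ to $L\tau_{\max}\eta$ and the coefficient of $\sum_s\E\|\bar{\bg}^s\|_2^2$ to $L^2\tau_{\max}\eta^2$, but these two targets pull the Young parameter in opposite directions: writing $\langle\bx,\by\rangle\geq-\frac{1}{2\epsilon}\|\bx\|_2^2-\frac{\epsilon}{2}\|\by\|_2^2$, the choice $\epsilon\asymp L\tau_{\max}\eta$ (needed so that $\frac{\epsilon}{2}\E\|\bg^t-\bar{\bg}^t\|_2^2\leq L\tau_{\max}\eta\frac{\sigma^2}{n}$) forces $\frac{1}{2\epsilon}\E\|\nabla F(\bw^{t-1})-\nabla F(\bw^{[t-\tau_{\max}]_+})\|_2^2\leq\frac{1}{2\epsilon}L^2\eta^2\tau_{\max}\sum_s\E\|\bg^s\|_2^2\asymp L\eta\sum_s\E\|\bg^s\|_2^2$, and your variance split then leaves a remainder of order $L\eta\sum_s\E\|\bar{\bg}^s\|_2^2$ --- an order of magnitude in $\eta$ larger than the $6L^2\tau_{\max}\eta^2$ the proposition asserts (taking $\epsilon\asymp1$ instead yields an $O(1)\cdot\sigma^2/n$ term, which destroys the final rate). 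The paper escapes this by pairing the drift with the full $\bg^t$ and invoking Lemma \ref{lem:bb}, which trades each $\E\|\bg^s\|_2^2$ for $4\E\|\nabla F(\bw^{s-1})\|_2^2$ plus only $O(L^2\tau_{\max}\eta^2)$-weighted $\|\bar{\bg}\|_2^2$ terms; this is precisely the origin of the $-2L\tau_{\max}\eta\sum_s\E\|\nabla F(\bw^{s-1})\|_2^2$ term in the statement, which your bound never produces --- a telltale sign that your route cannot reach the claimed coefficients. Your version is salvageable (retain the $+\frac{1}{2}\E\|\bar{\bg}^t\|_2^2$ from the polarization step and absorb the $O(L\eta)$ remainder when summing over $t$ in the theorem), but that yields a different inequality, not the proposition as stated.
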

	The proof of Proposition \ref{prop:cc} is deferred to Appendix \ref{appen:prop}.
	Equipped with this, we finally establish the convergence rate of DuDe-ASGD by choosing proper step size $\eta$, as stated in the following theorem:
	\begin{theorem}\label{thm}
		Suppose that Assumptions \ref{as:lowerbound}--\ref{as:delay} hold. 
		Let $\{\bw^t\}_{t=1}^{T}$ be the sequence generated by Algorithm \ref{algo1} and the step size be $\eta = \frac{1}{2} \sqrt{\frac{n\Delta}{L \sigma^2 \tau_{\max} T}}$ with $\Delta \coloneqq F(\bw^0) - F^*$. Then, for $T \geq \frac{1024 L \Delta n \tau_{\max}}{\sigma^2}$, it holds that
		\begin{align}
			\frac{1}{T} \sum_{t=1}^{T} \E \| \nabla F (\bw^{t-1}) \|_2^2  
			\leq 128 \sqrt{\frac{L \Delta \sigma^2  \tau_{\max} }{nT}} +  \frac{128(L\Delta)^{3/2} \sqrt{n\tau_{\max}}}{\sigma   T^{3/2}}.
			\nonumber
		\end{align}
	\end{theorem}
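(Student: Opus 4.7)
The plan is to combine the descent inequality \eqref{eq:descent} with Proposition~\ref{prop:cc}, upper bound $\E\|\bg^t\|_2^2$, and telescope from $t=1$ to $T$. For the second moment of $\bg^t$, I decompose into conditional mean plus martingale noise: the constraint $\tau_i(t)\geq d_i(t)+1$ from \eqref{eq:tau-d} ensures $\bw^{t-\tau_i(t)}$ is $\F_{t-d_i(t)-1}$-measurable while $\bxi_i^{t-d_i(t)}$ is independent of that filtration, so Assumptions~\ref{as:unbiased}--\ref{as:sigma} apply to each summand; independence of the samples across workers then yields
\[
\E\|\bg^t\|_2^2 \leq A_t + \frac{\sigma^2}{n}, \qquad A_t:=\E\Bigl\|\tfrac{1}{n}\sum_{j=1}^n \nabla F_j(\bw^{t-\tau_j(t)})\Bigr\|_2^2.
\]

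Substituting this and Proposition~\ref{prop:cc} into \eqref{eq:descent}, summing in $t$, using $\E[F(\bw^T)]\geq F^*$, and applying the double-sum exchange $\sum_{t=1}^T\sum_{s=1+[t-\tau_{\max}]_+}^{t}a_s\leq \tau_{\max}\sum_{s=1}^T a_s$ to every nested sum produces a Lyapunov-type inequality of the schematic form
\[
\tfrac{\eta}{8}\sum_{t=1}^T \E\|\nabla F(\bw^{t-1})\|_2^2 \leq \Delta + c_1 L\tau_{\max}^2 \eta^2 \sum_t \E\|\nabla F(\bw^{t-1})\|_2^2 + c_2 L\tau_{\max}^2\eta^2 \sum_t A_t + c_3 \frac{L\tau_{\max}\eta^2 T \sigma^2}{n}.
\]
The hard step is controlling $\sum_t A_t$: the dual-delay structure means $\bg^t$ mixes $n$ distinct delayed iterates, so $A_t$ is not directly $\E\|\nabla F(\bw^{t-1})\|_2^2$ plus noise. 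I split $\nabla F_j(\bw^{t-\tau_j(t)})=\nabla F_j(\bw^{t-1})+[\nabla F_j(\bw^{t-\tau_j(t)})-\nabla F_j(\bw^{t-1})]$ so that averaging in $j$ extracts exactly $\nabla F(\bw^{t-1})$, and bound the drift term through $L$-smoothness of $F_j$ together with the recursion $\bw^s-\bw^{s-1}=-\eta\bg^s$. After summing in $t$ and re-applying the $\tau_{\max}$ double-sum bound, this yields a self-referential inequality for $\sum_t A_t$ whose contraction coefficient is strictly less than one under $\eta\leq 1/(16L\tau_{\max})$, and solving it gives $\sum_t A_t\lesssim \sum_t\E\|\nabla F(\bw^{t-1})\|_2^2+L^2\tau_{\max}^2\eta^2 T\sigma^2/n$.

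Plugging this bound on $\sum_tA_t$ back in, the coefficient multiplying $\sum_t\E\|\nabla F(\bw^{t-1})\|_2^2$ on the right-hand side becomes $O(L\tau_{\max}^2\eta^2)$, which is absorbed into $\eta/16$ on the left by the stepsize choice combined with the hypothesis $T\geq 1024L\Delta n\tau_{\max}/\sigma^2$ (which also guarantees $\eta\leq 1/(16L\tau_{\max})$, validating Proposition~\ref{prop:cc}). What remains is
\[
\tfrac{1}{T}\sum_{t=1}^T \E\|\nabla F(\bw^{t-1})\|_2^2\leq \frac{C_1\Delta}{\eta T}+\frac{C_2 L\tau_{\max}\eta\sigma^2}{n},
\]
and the stated choice $\eta=\tfrac{1}{2}\sqrt{n\Delta/(L\sigma^2\tau_{\max}T)}$ balances the two summands to produce the leading $\sqrt{L\Delta\sigma^2\tau_{\max}/(nT)}$ factor, while the additive $(L\Delta)^{3/2}\sqrt{n\tau_{\max}}/(\sigma T^{3/2})$ term is the unbalanced remainder from the $\Delta/(\eta T)$ piece. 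The principal obstacle is clearly the $\sum_tA_t$ step: the dual-delay structure forces a joint recursion between $A_t$ and the iterate drift $\sum_s\E\|\bg^s\|_2^2$, in contrast to vanilla ASGD analyses where the stochastic gradient involves only a single delayed model and can be handled by a straightforward mean-plus-variance split.
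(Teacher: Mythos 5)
Your overall architecture matches the paper's: descent lemma, Proposition~\ref{prop:cc} for the inner product, a second-moment bound on $\bg^t$, telescoping with the $\tau_{\max}$ double-sum exchange, and the same stepsize tuning. The one place you genuinely diverge is the treatment of $\sum_t A_t$ with $A_t=\E\|\frac{1}{n}\sum_j\nabla F_j(\bw^{t-\tau_j(t)})\|_2^2$. You close a self-referential recursion $\sum_t A_t\lesssim \sum_t\E\|\nabla F(\bw^{t-1})\|_2^2+L^2\tau_{\max}^2\eta^2\,T\sigma^2/n$ and substitute back; this works (the contraction factor is $O(L^2\tau_{\max}^2\eta^2)\le 1/64$ under the stepsize condition) but it is more machinery than the paper needs. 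The paper instead applies the polarization identity $\langle \bx,\by\rangle=\frac12\|\bx\|_2^2+\frac12\|\by\|_2^2-\frac12\|\bx-\by\|_2^2$ inside the lower bound for the inner product, which produces a \emph{free positive} term $+\frac{\eta}{2}A_t$ on the good side of the descent inequality; after summing, $\frac{\eta}{2}\sum_tA_t$ dominates all the negative $O(L^2\tau_{\max}^2\eta^3)\sum_tA_t$ contributions and the whole block is simply discarded as nonpositive. Your route buys nothing extra here and costs an additional absorption step, but it is sound.

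Two smaller points deserve correction. First, your claimed bound $\E\|\bg^t\|_2^2\le A_t+\sigma^2/n$ asserts that the noise--mean cross terms vanish, but they need not: the delayed iterate $\bw^{t-\tau_j(t)}$ appearing in the ``mean'' part can depend on the sample $\bxi_i^{t-d_i(t)}$ whenever $\tau_j(t)\le d_i(t)$, so no single conditioning makes all cross terms zero. This is exactly the conditional-independence trap the paper's Assumption~\ref{as:unbiased} and its footnote warn about; the paper avoids it by using $\|\bx+\by\|_2^2\le 2\|\bx\|_2^2+2\|\by\|_2^2$ and only kills the noise--noise cross terms (Lemma~\ref{lem:variance}), at the cost of a factor $2$. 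Your argument survives with that factor inserted. Second, the $128(L\Delta)^{3/2}\sqrt{n\tau_{\max}}/(\sigma T^{3/2})$ term does not come from an ``unbalanced remainder of the $\Delta/(\eta T)$ piece'' --- those two pieces balance exactly at the stated $\eta$ --- but from the higher-order noise term $O(L^3\tau_{\max}^2\eta^3\sigma^2/n)$ that your schematic final inequality has silently dropped; you need to carry it through to recover the second summand of the theorem.
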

	The proof of Theorem \ref{thm} is deferred to Appendix \ref{appen:thm}.
	Theorem \ref{thm} demonstrates that DuDe-ASGD converges to a stationary point of Problem \eqref{eq:opt} at a rate of $\mathcal{O}\left( \sqrt{ {L \Delta \sigma^2  \tau_{\max} } / {nT}} \right)$ and the \textit{transient time} required for convergence exhibits a moderate linear dependence on both the number of workers $n$ and the maximum model delay $\tau_{\max}$.
	Critically, the convergence rate of DuDe-ASGD is achieved without imposing any assumptions on upper bounds for data heterogeneity or the dissimilarity among individual functions $F_i$. This indicates that DuDe-ASGD is well-suited for distributed environments with highly heterogeneous data.
	For sufficiently small $\epsilon > 0$, we can deduce that after acquiring
	\begin{equation} \label{eq:sample_complex} 
		\mathcal{O}\left( \frac{ L \Delta \sigma^2 \tau_{\max}}{n \epsilon^2}
		\right)~~\text{samples},
	\end{equation}
	the iterates produced by Algorithm \ref{algo1} satisfies $\frac{1}{T} \sum_{t=1}^{T} \E \| \nabla F (\bw^{t-1}) \|_2^2 \leq \epsilon$.
	This sample complexity aligns closely with the theoretical lower bound for finding $\epsilon$-stationary points using stochastic first-order methods \citep{arjevani2023lower}:\footnote{There exists a problem instance satisfying Assumptions \ref{as:lowerbound}--\ref{as:sigma} under which every randomized algorithm requires at least $c (L\Delta\sigma/\epsilon^2 + L\Delta/\epsilon)$ samples to find $\bw$ s.t. $\E \|\bw\|_2^2 \leq \epsilon$, where $c > 0$ is a universal constant.}
	\begin{corollary}\label{col}
		Provided that each worker contributes to the server's updates just once every $n$ iterations, which implies that  $\tau_{\max} = n$, then the sample complexity \eqref{eq:sample_complex} of Algorithm \ref{algo1} is minimax optimal.
	\end{corollary}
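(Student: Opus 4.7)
The plan is to obtain the corollary by a direct substitution into the sample complexity bound \eqref{eq:sample_complex} from Theorem~\ref{thm}, followed by a comparison with the stochastic first-order lower bound of \citet{arjevani2023lower} recalled in the footnote preceding the statement. Under the standing assumption that every worker contributes exactly once every $n$ server iterations, the longest elapsed stretch between two successive contributions of any fixed worker is at most $n$ iterations, so $\tau_{\max} = n$ is tight; plugging this into \eqref{eq:sample_complex} immediately gives an $\mathcal{O}(L\Delta\sigma^2/\epsilon^2)$ sample complexity, with the factor of $\tau_{\max}/n$ collapsing to a universal constant.

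More concretely, I would carry out the argument in three short steps. First, I would restate \eqref{eq:sample_complex} and set $\tau_{\max} = n$, obtaining sample complexity $\mathcal{O}(L\Delta\sigma^2/\epsilon^2)$ for making $\tfrac{1}{T}\sum_{t=1}^T \E\|\nabla F(\bw^{t-1})\|_2^2 \leq \epsilon$. Second, I would recall the minimax lower bound $\Omega(L\Delta\sigma^2/\epsilon^2 + L\Delta/\epsilon)$ (obtained from the result of \citet{arjevani2023lower} after squaring the target gradient-norm accuracy as in the footnote) that applies to any randomized stochastic first-order method under Assumptions~\ref{as:lowerbound}--\ref{as:sigma}. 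Third, I would observe that in the small-$\epsilon$ regime the dominant $L\Delta\sigma^2/\epsilon^2$ term of both the upper and lower bounds agree up to a universal constant, which is exactly the definition of minimax optimality in this context.

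Along the way I would verify two small consistency conditions so that the cited bounds actually apply. One is that the prerequisite $T \geq 1024 L\Delta n \tau_{\max}/\sigma^2 = 1024 L \Delta n^2/\sigma^2$ of Theorem~\ref{thm} is automatically satisfied once $\epsilon$ is small enough for the $\epsilon^{-2}$ term to be the bottleneck; this is the same ``sufficiently small $\epsilon$'' regime in which \eqref{eq:sample_complex} is derived and in which the higher-order $T^{-3/2}$ tail in Theorem~\ref{thm} is negligible. The other is that Assumptions~\ref{as:lowerbound}--\ref{as:sigma} of our setting match those used in \citet{arjevani2023lower}, so that the lower bound is genuinely applicable to DuDe-ASGD (this is where the footnote's disclaimer about heterogeneity-free assumptions is used).

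I do not expect any technical obstacle here; the content is essentially bookkeeping. The only conceptual point worth articulating carefully is why $\tau_{\max} = n$ is the ``right'' scaling to compare against, namely that this corresponds to the natural regime in which the $n$ workers contribute on equal footing, so the upper bound is expressed in terms of the same effective number of gradient queries that the lower bound is stated against.
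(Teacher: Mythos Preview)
Your proposal is correct and matches the paper's treatment exactly: the paper gives no separate proof of the corollary, treating it as an immediate consequence of substituting $\tau_{\max}=n$ into \eqref{eq:sample_complex} and comparing with the \citet{arjevani2023lower} lower bound quoted in the footnote. If anything, your added checks on the transient-time prerequisite and the small-$\epsilon$ regime are more careful than what the paper spells out.
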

	
	\subsection{Comparisons with Prior Works}\label{sec:comparisions}
	\vspace{-0mm}
	We compare the theoretical performance of DuDe-ASGD with several representative distributed SGD-based algorithms as detailed in Table \ref{tab:complexity}.
	
	\textbf{Aggregation-Based Algorithms.}
	Representative SGD-based algorithms that employ full aggregation strategies include sIAG \citep{wang2023linear} and MIFA \citep{gu2021fast}, both of which operate synchronously.
	The analysis of sIAG is applicable only to strongly convex objectives. The convergence rate of MIFA is established based on the Lipschitz continuity of Hessians and boundedness of gradient noise, with the transient time $\Omega(n\tau_{\max}^2)$ exhibiting a quadratic dependence on the maximum model delay. By contrast, our analysis is conducted under less restrictive conditions and achieves a reduced transient time.
	FedBuff \citep{nguyen2022federated}, a notable semi-asynchronous federated learning algorithm, incorporates partial aggregation where only a subset of delayed local updates are considered during each global update.
	There has been several convergence analyses for FedBuff \citep{nguyen2022federated,toghani2022unbounded,wang2023tackling}, while they all assume equal probability of worker contributions to the global update---an idealistic scenario that rarely holds in practice.
	
	\textbf{Asynchronous Algorithms.}
	The ASGD algorithms \citep{mishchenko2022asynchronous,koloskova2022sharper,islamov2024asgrad} require bounded data heterogeneity---an assumption not necessary in our analysis for DuDe-ASGD.
	In addition, through employing the full aggregation technique, the dominant term $\sqrt{1/nT}$ in the convergence rate of DuDe-ASGD suggests a \textit{linear speedup} relative to the number of workers $n$, a feature not achieved in prior ASGD variants.	
	
	\renewcommand{\thefootnote}{\arabic{footnote}}
	\begin{table}[t]
		\fontsize{8}{17}\selectfont
		\setlength\tabcolsep{4.3pt}
		\centering
		\caption{Convergence rates of representative distributed SGD-based algorithms for solving smooth nonconvex objectives with heterogeneous data.
			(Shorthand notation: 
			\textbf{Async.} = Asynchronous, 
			\textbf{Agg.} = Aggregation-based, 
			\textbf{Add. Assump.} = Additional assumptions aside from Assumptions \ref{as:lowerbound}--\ref{as:sigma}, 
			{\sf BDH} = Bounded Data Heterogeneity\tablefootnote{There exists $\zeta_i>0$ s.t. $\| \nabla F_i(\bw) - \nabla F(\bw) \|_2^2 \leq \zeta_i^2$ for all $i \in [n]$ and $\bw \in \mathbb{R}^p$. Define $\zeta^2 \coloneqq \frac{1}{n} \sum_{i=1}^{n} \zeta_i^2$ and $\zeta_{\max} \coloneqq \max_{i \in [n]} \{ \zeta_i \}$, which characterize the heterogeneity of data distributions.}, 
			\textsf{BN} = Bounded Noise\tablefootnote{There exists $\delta>0$ s.t. $\|\nabla f_i(\bw;\bxi_i)-\nabla F_i(\bw)\|_2 \leq \delta$ almost surely for all $i \in [n]$, $\bw\in\mathbb{R}^p$, and $\bxi_i \sim \mathbb{P}_i$.}, 
			\textsf{LH} = Lipschitz Hessian\tablefootnote{There exists $\rho > 0$ s.t. $\| \nabla^2 F_i(\bw) - \nabla^2 F_i(\bw') \|_2 \leq \rho \| \bw - \bw' \|_2$ for all $\bw,\bw' \in \mathbb{R}^p$.}, 
			\textsf{UWP} = Uniform Worker Participation\tablefootnote{Every worker contributes to the global aggregation with equal probability.}, 
			\textsf{BG} = Bounded Gradients\tablefootnote{There exists $G \geq 0$ s.t. $\|\nabla F_i(\bw)\|_2^2 \leq G^2$ for all $i \in [n]$.})
		}
		\vspace{1mm}
		\begin{tabular}{c c c c c}
			\toprule 
			\textbf{Algorithms}
			&
			\textbf{Async.?}
			& 
			\textbf{Agg.?}
			& 
			\textbf{Convergence Rates}
			&
			\textbf{Add. Assump.}
			\\
			\midrule
			\begin{tabular}{@{}c@{}}Synchronous SGD \vspace{-2mm}\\ \citep{khaled2023better}
			\end{tabular}
			& No
			& \textbf{Yes}
			& {\scriptsize $\displaystyle \mathcal{O}\left( \sqrt{\frac{\sigma^2}{nT}} + \frac{1}{T} \right)$}
			& --
			\\
			\hline
			\begin{tabular}{@{}c@{}}MIFA \vspace{-2mm}\\  \citep{gu2021fast}
			\end{tabular}
			& No
			& \textbf{Yes}
			& {\scriptsize $\displaystyle \mathcal{O}\left( \sqrt{\frac{1+\tau_{\avg}}{nKT}}\sigma^2 + \frac{n K \sigma \tau_{\max}{\color{magenta}\zeta} + \sigma^2\tau_{\max}{\color{magenta}\delta\rho}}{T} \right)$}\tablefootnote{$K$ is the number of local updates and $\tau_{\avg} \coloneqq \frac{1}{n(T-1)}\sum_{t=1}^{T-1}\sum_{i=1}^{n} \tau_i(t)$.}
			& {\sf BDH}, \textsf{BN}, \textsf{LH}
			\\
			\hline
			\begin{tabular}{@{}c@{}}FedBuff\vspace{-2mm} \\  \citep{nguyen2022federated}
			\end{tabular}
			& Semi
			& Partial
			& 
			{\scriptsize $\displaystyle \mathcal{O}\left( \frac{\sigma^2+K {\color{magenta}\zeta^2}}{\sqrt{mKT}}
				+ \frac{K\tau_{\avg}\tau_{\max}{\color{magenta}\zeta^2}+\tau_{\max}\sigma^2}{T}
				\right)$}\tablefootnote{$m$ is the number of workers participating in each global aggregation. We report the best-known convergence rate of FedBuff established in \citep{wang2023tackling}.
			}
			& {\sf BDH}, \textsf{UWP}
			\\
			\hline
			\begin{tabular}{@{}c@{}}Vanilla ASGD \vspace{-2mm}\\  \citep{mishchenko2022asynchronous}
			\end{tabular}
			& \textbf{Yes}
			& No
			& {\scriptsize $\displaystyle \mathcal{O}\left( \sqrt{\frac{\sigma^2}{T}} + \frac{n}{T} + {\color{magenta}\zeta_{\max}^2} \right)$}
			& {\sf BDH}
			\\
			\hline
			\begin{tabular}{@{}c@{}}Uniform ASGD \vspace{-2mm}\\ \citep{koloskova2022sharper}
			\end{tabular}
			& \textbf{Yes}
			& No
			& 
			{\scriptsize $\displaystyle \mathcal{O}\left( \sqrt{\frac{\sigma^2+{\color{magenta}\zeta^2}}{T}} + \frac{\sqrt[3]{\tau_{\avg}\frac{1}{n}\sum_{i=1}^{n}\tau_{\avg}^i{\color{magenta}\zeta_i^2}}}{T^{2/3}} \right)$}
			& {\sf BDH}
			\\
			\hline
			\begin{tabular}{@{}c@{}}Shuffled ASGD \vspace{-2mm}\\ \citep{islamov2024asgrad}
			\end{tabular} 
			& \textbf{Yes}
			& No
			& {\scriptsize $\displaystyle \mathcal{O}\left( \sqrt{\frac{\sigma^2}{T}} + \frac{(\sqrt{n}{\color{magenta}\zeta})^{2/3}+(n{\color{magenta}G})^{2/3}}{T^{2/3}} + \frac{n}{T} \right)$}
			& {\sf BDH}, \textsf{BG}
			\\
			\hline
			\cellcolor{gray!10}
			\begin{tabular}{@{}c@{}}DuDe-ASGD\vspace{-2mm} \\ (This Paper)
			\end{tabular}   
			& {\cellcolor{gray!10}\textbf{Yes}}
			& {\cellcolor{gray!10}\textbf{Yes}}
			& \cellcolor{gray!10} {\scriptsize $\displaystyle \mathcal{O}\left( \sqrt{\frac{\sigma^2  \tau_{\max} }{nT}} +  \frac{\sqrt{n\tau_{\max}}}{\sigma T^{3/2}} \right)$}
			& {\cellcolor{gray!10} --}
			\\
			\bottomrule
		\end{tabular}
		\label{tab:complexity}
		\vspace{-0mm}
	\end{table}
	
	\section{Numerical Experiments}\label{sec:numerical}
	\vspace{-0mm}
	\textbf{Experiment Setup.} 
	We simulate a distributed system comprising $n$ workers. 
	To model the hardware variations across different workers, we employ the \textit{fixed-computation-speed model} described in \citep{mishchenko2022asynchronous}. Specifically, each worker $i$ consistently takes fixed units of time, $s_i$, to compute a stochastic gradient. For each $i\in [n]$, $s_i$ is drawn from the truncated normal distribution $\mathcal{TN}(\mu, \texttt{std})$ with a mean $\mu = 1$ and standard deviation $\texttt{std} = 1$ and $5$, ensuring all time values are greater than $0$. A higher $\texttt{std}$ indicates more significant hardware variation, leading to a greater maximum delay in the models during the training process. Furthermore, we assume that the communication time between the server and workers, as well as the server's computation time for global updates, are negligible.
	We implement DuDe-ASGD in its fully asynchronous form using mini-batching, along with other distributed SGD-based algorithms listed in Table \ref{tab:complexity}. 
	Each mini-batch comprises 64 samples, uniformly drawn from the local datasets allocated to the workers.
	We evaluate the performance of these algorithms using the CIFAR-10 image dataset \citep{krizhevsky2009learning} by training a convolutional neural network with two convolutional layers for image classification. 
	Following the approach described in \cite{yurochkin2019bayesian}, we allocate the dataset to the workers based on the Dirichlet distribution with concentration parameter $\alpha$. A lower $\alpha$ results in greater data heterogeneity among the workers.
	The step sizes for the algorithms under comparison are selected from the set $\{0.001, 0.005, 0.01\}$, based on which they achieve the fastest convergence. 
	We implement all algorithms in PyTorch and run experiments on NVIDIA RTX A5000 GPUs.
	
	\textbf{Numerical Results.}
	Each experiment is independently repeated three times using different random seeds, and the mean and standard deviation of the numerical performance for a configuration of $n=10$ workers are shown in Figure \ref{fig:loss}.
	In scenarios of high data heterogeneity, specifically $\alpha=0.1$, DuDe-ASGD demonstrates superior performance by achieving the fastest convergence rate in training loss and the highest test accuracy. 
	Additionally, DuDe-ASGD maintains consistent performance even as the computation speeds of the workers vary significantly, as indicated by an increasing \texttt{std}, indicating its robustness to hardware variations. 
	On the other hand, under conditions of low data heterogeneity, where $\alpha=0.5$, the performance of DuDe-ASGD aligns closely with that of vanilla ASGD. 
	This similarity supports the theoretical convergence rate of vanilla ASGD, which includes an additive $\zeta_{\max}^2$ term that becomes less significant as $\alpha$ increases. 
	The convergence rate of synchronous SGD is theoretically invariant across different levels of data heterogeneity. However, its practical runtime performance suffers from the slowest worker, particularly as \texttt{std} increases.
	Furthermore, the Uniform ASGD does not deliver satisfactory outcomes, potentially because the repeated sampling of a slow worker before it completes its task can impair performance. 
	
	\begin{figure}[t]
		\centering
		\begin{subfigure}[b]{\textwidth}
			\centering
			\includegraphics[width=\textwidth]{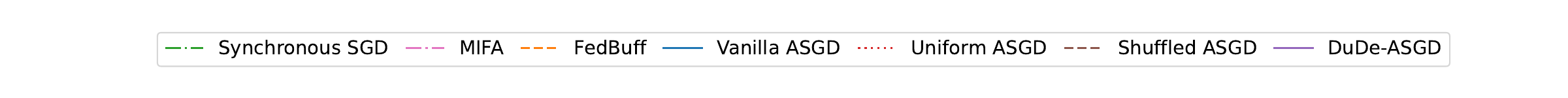}
			\vspace{-7mm}
		\end{subfigure}
		\begin{subfigure}[t]{\textwidth}
			\centering
			\includegraphics[width=\textwidth]{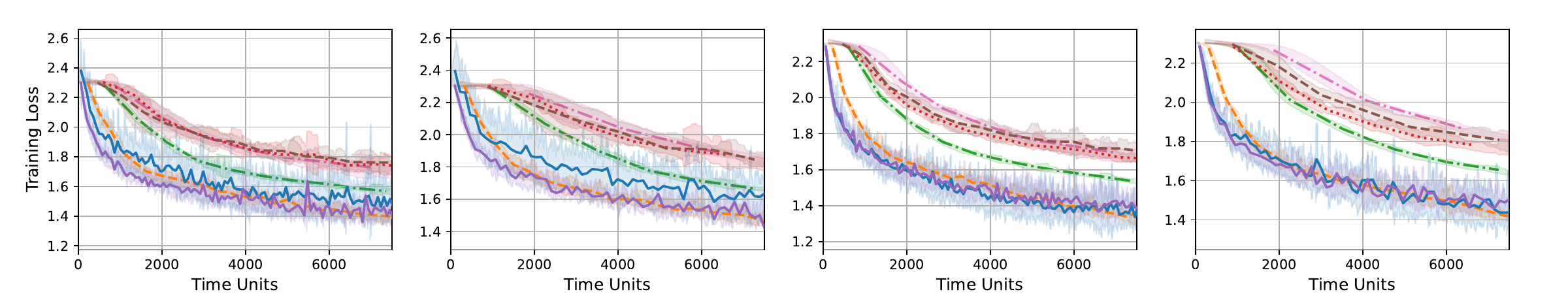}
			\vspace{-7mm}
		\end{subfigure}
		\begin{subfigure}[t]{\textwidth}
			\centering
			\includegraphics[width=\textwidth]{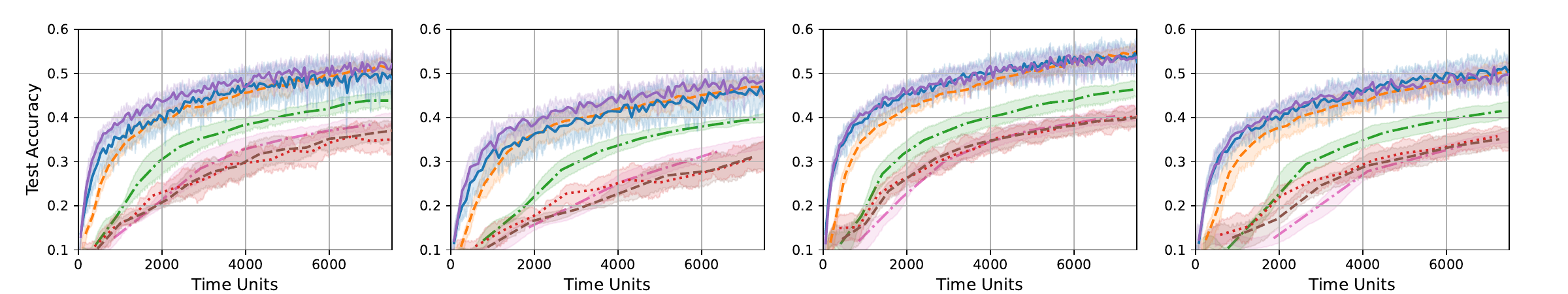}
		\end{subfigure}
		\caption{Convergence curves displaying training losses and test accuracies over time with $n=10$ workers. (1st column: $\alpha=0.1, \texttt{std}=1$; 2nd column: $\alpha=0.1, \texttt{std}=5$; 3rd column: $\alpha=0.5, \texttt{std}=1$; 4th column: $\alpha=0.5, \texttt{std}=5$) 
		}
		\vspace{-0mm}
		\label{fig:loss}
	\end{figure}
	
	\section{Conclusion}\label{sec:conclusion}
	\vspace{-0mm}
	This paper introduces the dual-delayed asynchronous SGD (DuDe-ASGD), a novel approach to distributed machine learning that effectively counteracts the challenges posed by data heterogeneity across workers. By leveraging an asynchronous mechanism that utilizes stale gradients from all workers, DuDe-ASGD not only alleviates synchronization overheads but also balances the contributions of diverse worker datasets to the learning process. Our comprehensive theoretical analysis shows that DuDe-ASGD achieves near-minimax-optimal convergence rates for nonconvex problems, regardless of variations in data distribution across workers. This significant advancement highlights the robustness and efficiency of DuDe-ASGD in handling highly heterogeneous data scenarios, which are prevalent in modern distributed environments. Experiments on real-world datasets validate its superior runtime performance under high data heterogeneity in comparison to other leading distributed SGD-based algorithms, thereby confirming its potential as an effective tool for large-scale machine learning tasks. 
	
	\bibliographystyle{plainnat}
	\bibliography{asgd}
	
	\appendix
	\section{Additional Related Works}\label{sec:related}
	
	\textbf{Other Variants of ASGD.} 
	While numerous variants of ASGD have been developed, they mostly address simpler scenarios with homogeneous data \citep{agarwal2011distributed,lian2015asynchronous,feyzmahdavian2016asynchronous,leblond2018improved,stich2020error,arjevani2020tight,dutta2021slow}, where all workers operate on the same loss function and possess data with an identical probability distribution. In this setup, Problem \eqref{eq:opt} reduces to
	\[
	\min_{\bw \in \mathbb{R}^p} ~ \mathbb{E}_{\bxi \sim \mathbb{P}_1} \left[ f_1 (\bw; \bxi) \right].
	\]
	Another line of research explores ASGD for homogeneous data under the \textit{model parallelism} setting \citep{recht2011hogwild,de2015taming,mania2017perturbed}, where each worker is solely responsible for updating a specific block of the model parameters independently, such as a distinct layer of a neural network.
	The assumption of data homogeneity is valid for \textit{shared-memory architectures}; however, this assumption becomes highly idealistic in data-parallelism scenarios, where workers may hold significantly diverse local datasets, especially in applications such as Internet of Things, healthcare, and financial services \citep{li2020federated,kairouz2021advances}.
	An independent line of works have also considered ASGD in decentralized networks \citep{lian2018asynchronous,even2024asynchronous}, contrasting with the more commonly studied centralized architectures, as depicted in Figure \ref{fig:asgd-illustration}, that rely on a central server.
	
	\textbf{Asynchronous Federated Learning.}
	Federated learning (FL) is an emerging distributed machine learning paradigm that pays particular attention to data privacy and heterogeneity. 
	Asynchronous federated learning algorithms \citep{xie2019asynchronous,chen2020asynchronous,nguyen2022federated,zakerinia2022quafl,wang2023tackling,fraboni2023general,wang2024achieving,leconte2024favano} share similarities with ASGD but typically include a local update strategy that potentially reduces the communication frequency between the server and workers. 
	Among these works, FedBuff \citep{nguyen2022federated} serves as a representative algorithm where workers operate independently, and the server waits for a subset of workers $\mathcal{C}_t$ to submit their local updates in each global iteration. The local and global updates of FedBuff proceeds as follows:
	\begin{align*}
		&\bw_i^{\tau_i(t),k} = \bw_i^{\tau_i(t),k-1} - \eta_{\ell} \nabla f_i ( \bw_i^{\tau_i(t),k-1}; \bxi_i^{t,k} ), ~k = 1, 2, \dots, K,~i \in \mathcal{C}_t,
		\\
		&\bw^{t} = \bw^{t-1} - \frac{\eta_g}{|\mathcal{C}_t|} \sum_{i \in \mathcal{C}_t} (\bw_i^{\tau_i(t),0}- \bw_i^{\tau_i(t),K}), ~t = 1, 2, \dots,
	\end{align*}
	where $\eta_{\ell}$ and $\eta_g$ are the local and global step sizes, respectively.
	This approach can be regarded as a \textit{semi-asynchronous} algorithm designed to decrease communication frequency at the expense of increased waiting time.
	Nevertheless, the local update strategy in federated learning leads to the \textit{client drift} phenomenon \citep{karimireddy2020scaffold,sun2023mimic}, where local models at each worker tend to deviate from the global model. Therefore, asynchronous federated learning algorithms typically require either data heterogeneity or function dissimilarity to be bounded, so that local models remain closely aligned with the global model throughout the training process.
	
	\textbf{Incremental Aggregated Gradient (IAG)-Type Methods:} 
	The algorithmic concept of DuDe-ASGD is rooted in the well-established IAG methods \citep{blatt2007convergent,gurbuzbalaban2017convergence,vanli2018global}, which update the model parameters by using a combination of new and previously computed gradients.
	When solving Problem \eqref{eq:opt}, the iterative formula of IAG can be expressed as:
	\[
	\bw^t = \bw^{t-1} - \frac{\eta_t}{n} \sum_{i=1}^{n} \nabla F_i ( \bw^{t-\tau_i(t)}), ~t = 1, 2, \dots,
	\]
	where $\{\eta_t\}_{t \geq 1}$ are step sizes.
	IAG is suitable for asynchronous distributed implementations.
	SAG \citep{roux2012stochastic,schmidt2017minimizing} and
	SAGA \citep{defazio2014saga} are randomized versions of IAG, where the index $i$ of the component function to be updated is selected at random in each iteration.
	\cite{glasgow2022asynchronous} developed ADSAGA, an extension of SAGA to the asynchronous setting, assuming a stochastic delay model and that the server is aware of the delay distribution.
	Nevertheless, these algorithms all assume that the \textit{exact gradients} $\nabla F_i(\cdot)$ can be evaluated by each worker. This is a fundamental difference from our DuDe-ASGD, which relies solely on \textit{stochastic gradients} $\nabla f_i(\cdot,\bxi_i)$ for any $\bxi_i \in \Xi_i$.	
	
	\section{Proofs of Main Results}\label{sec:appendix-proofs}
	
	For random variables $P,Q$ and function $h$, we denote by
	\[
	\E_P[h(P,Q)] \coloneqq \E[h(P,Q) \mid Q]
	\]
	the \textit{conditional expectation} with respect to $P$ while holding $Q$ constant.
	
	\subsection{Technical Lemmas}
	
	\begin{lemma}\label{lem:variance}
		Suppose that Assumptions \ref{as:unbiased} and \ref{as:sigma} hold. Then, it holds for all $t \geq 1$ that
		\begin{align}
			\E \left\| \frac{1}{n} \sum_{i=1}^{n} \left( \nabla f_i(\bw^{t-\tau_i(t)}; \bxi_i^{t-d_i(t)}) - \nabla F_i(\bw^{t-\tau_i(t)}) \right) \right\|_2^2 \leq \frac{\sigma^2}{n}.
			\nonumber
		\end{align}
	\end{lemma}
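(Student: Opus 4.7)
The plan is to treat the quantity inside the norm as a sum of $n$ noise vectors $\bm{e}_i := \nabla f_i(\bw^{t-\tau_i(t)}; \bxi_i^{t-d_i(t)}) - \nabla F_i(\bw^{t-\tau_i(t)})$ and show the average behaves like a variance of a sum in which the cross terms vanish. Expanding,
\begin{align*}
\E\left\|\tfrac{1}{n}\sum_{i=1}^n \bm{e}_i\right\|_2^2
= \tfrac{1}{n^2}\sum_{i=1}^n \E\|\bm{e}_i\|_2^2
+ \tfrac{1}{n^2}\sum_{i \neq j} \E\langle \bm{e}_i, \bm{e}_j \rangle.
\end{align*}
The aim is to bound each diagonal term by $\sigma^2$ and to show each cross term equals zero, which immediately yields the desired $\sigma^2/n$.

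For the diagonal terms, fix $i$ and apply Assumption~\ref{as:sigma} with $r = t - \tau_i(t)$, $s = t - d_i(t) - 1$, and data-sample time $t - d_i(t)$. Property \eqref{eq:tau-d} gives $r \leq s < t - d_i(t)$, so the assumption is applicable and produces $\E[\|\bm{e}_i\|_2^2 \mid \F_s] \leq \sigma^2$; the tower property finishes this step.

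For a cross term with $i \neq j$, I would first order the workers by their data-sample times $s_i \coloneqq t - d_i(t)$. Since only one worker contributes per server iteration in Algorithm~\ref{algo1}, the $s_i$'s for the active workers are distinct, so WLOG $s_i < s_j$. The crucial structural observation is that between worker $i$'s contributions the stored stale gradient is frozen, so $t - \tau_i(t) = s_i - \tau_i(s_i)$; consequently $\bm{e}_i$ coincides with the stored noise $\bG_i^{s_i} - \nabla F_i(\bw^{s_i-\tau_i(s_i)})$, which is $\F_{s_i}$-measurable and therefore $\F_{s_j-1}$-measurable. On the other hand, applying Assumption~\ref{as:unbiased} with $r = t - \tau_j(t) \leq s_j - 1$ to the fresh sample $\bxi_j^{s_j}$ gives $\E[\bm{e}_j \mid \F_{s_j - 1}] = 0$, so another application of the tower property kills the cross term. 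Plugging the two bounds back into the expansion yields the claim.

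The step I expect to require the most care is the $\F_{s_i}$-measurability of $\bm{e}_i$. This is not a generic ``iterates are adapted'' observation; it relies specifically on the dual-delayed design of DuDe-ASGD, which ensures that $\bw^{t-\tau_i(t)}$ remains frozen at $\bw^{s_i - \tau_i(s_i)}$ between worker $i$'s updates, so that the stochastic gradient computed from the stale sample $\bxi_i^{s_i}$ is literally the stored gradient $\bG_i^{s_i}$ that is recoverable from $\bw^{s_i-1}, \bw^{s_i}$ and the earlier $\F_{s_i-1}$-measurable aggregate. The only boundary case to address separately is the initialization round, where several $s_i$ may coincide at $1$; here the independence of the initial samples across workers handles the cross terms directly given the deterministic $\bw^0$.
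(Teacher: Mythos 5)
Your proof is correct and takes essentially the same route as the paper's: expand the squared norm, show each cross term vanishes by conditioning so that the fresher of the two samples has zero conditional mean (using $\tau_i(t)\geq d_i(t)+1$ to ensure the relevant iterates predate that sample), and bound each diagonal term by $\sigma^2$ via Assumption~\ref{as:sigma} and the tower property. Your explicit verification that the older noise term is $\F_{s_i}$-measurable is a more careful rendering of what the paper phrases simply as independence of the fresher sample from the earlier iterate, but the underlying argument is identical.
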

	\begin{proof}
		Expanding the squared norm gives
		\begin{align}
			&~\E \left\| \frac{1}{n} \sum_{i=1}^{n} \left( \nabla f_i( \bw^{t-\tau_i(t)}; \bxi_i^{t-d_i(t)} ) - \nabla F_i( \bw^{t-\tau_i(t)} ) \right)   \right\|_2^2
			\nonumber
			\\
			=&~ \frac{1}{n^2} \sum_{i=1}^{n} \E \| \nabla f_i( \bw^{t-\tau_i(t)}; \bxi_i^{t-d_i(t)} ) - \nabla F_i(\bw^{t-\tau_i(t)}) \|_2^2
			\nonumber
			\\
			&~+ \frac{1}{n^2} \sum_{i \neq j} \underbrace{\E \left\langle \nabla f_i( \bw^{t-\tau_i(t)}; \bxi_i^{t-d_i(t)} ) - \nabla F_i( \bw^{t-\tau_i(t)} ),  \nabla f_j(\bw^{t-\tau_j(t)}; \bxi_j^{t-d_j(t)}) - \nabla F_j(\bw^{t-\tau_j(t)}) \right\rangle}_{Y_{ij}}
			\label{eq:Yij}
		\end{align}
		To simplify $Y_{ij}$ for $i,j \in [n]$ s.t. $i \neq j$, we assume without loss of generality that $d_i(t) \geq d_j(t)$.
		Then, $t-\tau_i(t) \leq t-d_j(t)$ and thus $\bxi_j^{t-d_j(t)}$ is independent of $\bw^{t-\tau_i(t)}$.
		Hence, using the law of total expectation and Assumption \ref{as:unbiased}, we have
		\begin{align}
			Y_{ij}=&~\E \left\langle \nabla f_i( \bw^{t-\tau_i(t)}; \bxi_i^{t-d_i(t)} ) - \nabla F_i( \bw^{t-\tau_i(t)} ),  \nabla f_j( \bw^{t-\tau_j(t)}; \bxi_j^{t-d_j(t)} ) - \nabla F_j( \bw^{t-\tau_j(t)} ) \right\rangle 
			\nonumber
			\\
			=&~ \E \left[ \E_{\bxi_j^{t-d_j(t)} \sim \mathbb{P}_j} \left\langle \nabla f_i( \bw^{t-\tau_i(t)}; \bxi_i^{t-d_i(t)} ) - \nabla F_i( \bw^{t-\tau_i(t)} ),  \nabla f_j( \bw^{t-\tau_j(t)}; \bxi_j^{t-d_j(t)} ) - \nabla F_j( \bw^{t-\tau_j(t)} ) \right\rangle \right]
			\nonumber
			\\
			=&~ \E \left[ \left\langle \nabla f_i( \bw^{t-\tau_i(t)}; \bxi_i^{t-d_i(t)} ) - \nabla F_i( \bw^{t-\tau_i(t)} ),  \E_{\bxi_j^{t-d_j(t)}} \left[ \nabla f_j( \bw^{t-\tau_j(t)}; \bxi_j^{t-d_j(t)} ) - \nabla F_j( \bw^{t-\tau_j(t)} ) \right] \right\rangle \right]
			\nonumber
			\\
			=&~ {0}.
			\nonumber
		\end{align}
		Substituting this back into \eqref{eq:Yij} gives 
		\begin{align}
			&~\E \left\| \frac{1}{n} \sum_{i=1}^{n} \left( \nabla f_i( \bw^{t-\tau_i(t)}; \bxi_i^{t-d_i(t)} ) - \nabla F_i( \bw^{t-\tau_i(t)} ) \right)   \right\|_2^2
			\nonumber
			\\
			=&~ \frac{1}{n^2} \sum_{i=1}^{n} \E \| \nabla f_i( \bw^{t-\tau_i(t)}; \bxi_i^{t-d_i(t)} ) - \nabla F_i(\bw^{t-\tau_i(t)}) \|_2^2
			\nonumber
			\\
			=&~ \frac{1}{n^2} \sum_{i=1}^{n} \E \left[ \E \left[ \| \nabla f_i( \bw^{t-\tau_i(t)}; \bxi_i^{t-d_i(t)} ) - \nabla F_i(\bw^{t-\tau_i(t)}) \|_2^2 \mv \bw^{t-\tau_i(t)} \right] \right]
			\nonumber
			\\
			\leq&~ \frac{\sigma^2}{n}.
			\label{eq:X1}
		\end{align}
		where the second equality holds due to the law of total expectation and the inequality follows from Assumption \ref{as:sigma}.
	\end{proof}
	
	\begin{lemma}\label{lem:diff}
		Suppose that Assumptions \ref{as:unbiased} and \ref{as:sigma} hold. Then, it holds for all $i \in [n]$ and $t \geq 1$ that
		\begin{align}
			\E \| \bw^t - \bw^{t-\tau_i(t)} \|_2^2
			\leq 2 \tau_{\max}^2 \eta^2 \frac{\sigma^2}{n} + 2 \tau_{\max} \eta^2 \sum_{s=1+[t-\tau_{\max}]_+}^{t} \E \left\| \frac{1}{n} \sum_{j=1}^{n} \nabla F_j(\bw^{s-\tau_j(s)}) \right\|_2^2.
			\nonumber
		\end{align}
	\end{lemma}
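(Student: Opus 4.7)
The plan is to start from the standard telescoping expression
\[
\bw^t - \bw^{t-\tau_i(t)} = -\eta \sum_{s=t-\tau_i(t)+1}^{t} \bg^s,
\]
which follows directly from the iteration \eqref{eq:iter}. Using Assumption \ref{as:delay} to replace the index $t - \tau_i(t)$ by the possibly earlier (and $i$-independent) index $[t-\tau_{\max}]_+$, I would enlarge the summation range to $s = 1+[t-\tau_{\max}]_+, \dots, t$ (the added terms only increase the squared norm), so that the bound becomes uniform in $i$.

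Next, I would apply Cauchy--Schwarz (or the power-mean inequality) to the sum of at most $\tau_{\max}$ vectors:
\[
\Bigl\| \sum_{s=1+[t-\tau_{\max}]_+}^{t} \bg^s \Bigr\|_2^2
\leq \tau_{\max} \sum_{s=1+[t-\tau_{\max}]_+}^{t} \| \bg^s \|_2^2,
\]
pulling out one factor of $\tau_{\max}$ and one factor of $\eta^2$. Then for each term I would split
\[
\bg^s = \underbrace{\tfrac{1}{n}\sum_{j=1}^{n} \nabla F_j(\bw^{s-\tau_j(s)})}_{\text{mean part}} + \underbrace{\tfrac{1}{n}\sum_{j=1}^{n} \bigl(\nabla f_j(\bw^{s-\tau_j(s)};\bxi_j^{s-d_j(s)}) - \nabla F_j(\bw^{s-\tau_j(s)})\bigr)}_{\text{noise part}},
\]
and use $\|a+b\|_2^2 \leq 2\|a\|_2^2 + 2\|b\|_2^2$, taking expectations.

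The mean part directly produces the second term in the stated bound, with the correct factor of $2 \tau_{\max} \eta^2$. For the noise part, I would invoke Lemma \ref{lem:variance} term-wise to get an $\sigma^2/n$ bound, and then the outer sum over at most $\tau_{\max}$ indices contributes one more factor of $\tau_{\max}$, yielding $2 \tau_{\max}^2 \eta^2 \sigma^2 / n$, exactly matching the first term of the claim.

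The only subtle point—and the main obstacle—is the edge case $t \leq \tau_{\max}$, where $t - \tau_i(t)$ could in principle be $0$; here I would just note that $\bw^0$ is a constant so the telescoping is still valid as written, and the use of $[t-\tau_{\max}]_+$ in the summation range automatically covers this regime without producing spurious terms. The rest is a routine application of Cauchy--Schwarz and Lemma \ref{lem:variance}; no delicate conditioning argument is needed here because Lemma \ref{lem:variance} has already absorbed the independence issues between data samples at different workers.
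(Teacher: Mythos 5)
Your proof is essentially correct and reaches the paper's bound with the same constants, but the order of operations is genuinely different and, as it happens, simpler. The paper first splits $\bg^s$ into mean and noise parts \emph{inside} the telescoped sum, applies $\|a+b\|_2^2 \le 2\|a\|_2^2+2\|b\|_2^2$ to separate the accumulated noise $\Phi_1=\E\|\sum_s\sum_j(\nabla f_j-\nabla F_j)\|_2^2$ from the accumulated mean $\Phi_2$, and is then forced to expand $\Phi_1$ and handle cross-time inner products $Z^{ss'}$: the cross-worker ones vanish by a conditioning argument, but the same-worker ones do not (the same data sample $\bxi_j$ can appear in $\bg^s$ and $\bg^{s'}$ for $s\neq s'$) and are bounded by Young's inequality, yielding $\tau_i(t)^2 n\sigma^2$. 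You instead apply Cauchy--Schwarz to $\|\sum_s \bg^s\|_2^2 \le \tau_{\max}\sum_s\|\bg^s\|_2^2$ \emph{before} splitting, so each noise term is controlled per-iteration by Lemma \ref{lem:variance} and no cross-time conditioning is ever needed. Your final constants ($2\tau_{\max}^2\eta^2\sigma^2/n$ and $2\tau_{\max}\eta^2$) match the paper's exactly, so nothing is lost; your route buys a shorter argument, while the paper's more careful expansion of $\Phi_1$ ultimately gains nothing over plain Cauchy--Schwarz here.

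One step of yours is wrongly justified, though easily repaired: you claim that enlarging the summation range from $s\in\{1+t-\tau_i(t),\dots,t\}$ to $s\in\{1+[t-\tau_{\max}]_+,\dots,t\}$ ``only increases the squared norm.'' That is false for the norm of a \emph{sum of vectors} --- adding vectors can decrease $\|\sum_s \bg^s\|_2^2$. The fix is to reverse the order: apply Cauchy--Schwarz over the actual range of $\tau_i(t)$ terms to get $\tau_i(t)\sum_{s=1+t-\tau_i(t)}^{t}\|\bg^s\|_2^2$, and only then use $\tau_i(t)\le\tau_{\max}$ together with the non-negativity of the summands to enlarge both the prefactor and the index set. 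With that reordering (which is exactly how the paper handles its $\Phi_2$ term), your argument goes through.
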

	\begin{proof}
		For all $i \in [n]$ and $t \geq 1$, it follows from the telescoping sum 
		\[
		\sum_{s = 1+t-\tau_i(t)}^{t} \left( \bw^s - \bw^{s-1} \right) = \bw^t - \bw^{t-\tau_i(t)}
		\]
		and the iterative formula \eqref{eq:iter} that
		\begin{align}
			&\E \| \bw^t - \bw^{t-\tau_i(t)} \|_2^2
			\nonumber
			\\
			=~&  \E \left\| \sum_{s = 1+t-\tau_i(t)}^{t} \left( \bw^{s} - \bw^{s-1} \right) \right\|_2^2
			\nonumber
			\\ 
			=~& \E \left\| \sum_{s = 1+t-\tau_i(t)}^{t} \eta \bg^s \right\|_2^2
			\nonumber
			\\
			=~& \E \left\| \sum_{s = 1+t-\tau_i(t)}^{t} \frac{\eta}{n} \sum_{j=1}^{n} \nabla f_j(\bw^{s-\tau_j(s)}; \bxi_j^{s-d_j(s)}) \right\|_2^2
			\nonumber
			\\
			=~& \frac{\eta^2}{n^2} \E \left\| \sum_{s = 1+t-\tau_i(t)}^{t} \sum_{j=1}^{n} \left( \nabla f_j(\bw^{s-\tau_j(s)}; \bxi_j^{s-d_j(s)}) - \nabla F_j(\bw^{s-\tau_j(s)})
			+ \nabla F_j(\bw^{s-\tau_j(s)}) \right) \right\|_2^2
			\nonumber
			\\
			\leq~& \frac{2 \eta^2}{n^2} \underbrace{\E \left\| \sum_{s = 1+t-\tau_i(t)}^{t} \sum_{j=1}^{n} \left( \nabla f_j(\bw^{s-\tau_j(s)}; \bxi_j^{s-d_j(s)}) - \nabla F_j(\bw^{s-\tau_j(s)}) \right) \right\|_2^2}_{\Phi_1}
			\nonumber
			\\
			&+ \frac{2 \eta^2}{n^2} \underbrace{\E \left\| \sum_{s = 1+t-\tau_i(t)}^{t} \sum_{j=1}^{n} \nabla F_j(\bw^{s-\tau_j(s)}) \right\|_2^2}_{\Phi_2},
			\label{eq:diff-tmp}
		\end{align}
		where the inequality uses the fact that $\| \bx + \by \|_2^2 \leq 2 \| \bx \|_2^2 + 2 \| \by \|_2^2$ for vectors $\bx$ and $\by$. 
		Subsequently, we upper bound $\Phi_1$ and $\Phi_2$, respectively.
		
		\underline{Upper bounding $Y_1$:} 
		Expanding $\Phi_1$, we have
		\begin{align}
			&\Phi_1 = \sum_{s = 1+t-\tau_i(t)}^{t} \E \left\| \sum_{j=1}^{n} \left( \nabla f_j(\bw^{s-\tau_j(s)}; \bxi_i^{s-d_j(s)}) - \nabla F_j(\bw^{s-\tau_j(s)}) \right) \right\|_2^2
			\nonumber
			\\
			& + \hspace{-3mm} \sum_{\substack{s,s': s \neq s', \\ 1+t-\tau_i(t) \leq s,s' \leq t}} 
			\underbrace{\E \hspace{-0.5mm}\left\langle\hspace{-0.2mm} \sum_{j=1}^{n} \hspace{-1mm}\left(\hspace{-1mm} \nabla f_j(\bw^{s-\tau_j(s)}; \bxi_j^{s-d_j(s)}) \hspace{-0.6mm}-\hspace{-1mm} \nabla F_j(\bw^{s-\tau_j(s)}) \hspace{-0.8mm}\right)\hspace{-1mm}, 
				\hspace{-0.5mm}\sum_{j=1}^{n}\hspace{-1mm} \left(\hspace{-0.5mm} \nabla f_j(\bw^{s'\hspace{-0.5mm}-\hspace{-0.2mm}\tau_j(s')}; \bxi_j^{s'\hspace{-0.5mm}-\hspace{-0.2mm}d_j(s')}) \hspace{-0.8mm}-\hspace{-1mm} \nabla F_j(\bw^{s'\hspace{-0.5mm}-\hspace{-0.2mm}\tau_j(s')}) \hspace{-0.5mm}\right)\hspace{-1.5mm} \right\rangle\hspace{-0.5mm}}_{Z^{ss'}}\hspace{-0.5mm}.
			\label{eq:Y1}
		\end{align}
		Inspecting the inner product terms in \eqref{eq:Y1}, we note that for all $s,s' \in [1+t-\tau_i(t), t]$ s.t. $s \neq s'$,
		\begin{align}
			Z^{ss'} 
			=&~ \E \left\langle \sum_{j=1}^{n} \left( \nabla f_j(\bw^{s-\tau_j(s)}; \bxi_j^{s-d_j(s)}) - \nabla F_j(\bw^{s-\tau_j(s)}) \right), \sum_{j=1}^{n} \left( \nabla f_j(\bw^{s'-\tau_j(s')}; \bxi_j^{s'-d_j(s')}) - \nabla F_j(\bw^{s'-\tau_j(s')}) \right) \right\rangle
			\nonumber
			\\			
			=&~ \E \left[ \sum_{i=1}^{n} \sum_{j=1}^{n} \left\langle \nabla f_i(\bw^{s-\tau_i(s)}; \bxi_i^{s-d_i(s)}) - \nabla F_i(\bw^{s-\tau_i(s)}) , \nabla f_j(\bw^{s'-\tau_j(s')}; \bxi_j^{s'-d_j(s')}) - \nabla F_j(\bw^{s'-\tau_j(s')}) \right\rangle \right]
			\nonumber
			\\
			=&~ \sum_{i=1}^{n} \sum_{j=1}^{n} \E \left\langle \nabla f_i(\bw^{s-\tau_i(s)}; \bxi_i^{s-d_i(s)}) - \nabla F_i(\bw^{s-\tau_i(s)}) , \nabla f_j(\bw^{s'-\tau_j(s')}; \bxi_j^{s'-d_j(s')}) - \nabla F_j(\bw^{s'-\tau_j(s')}) \right\rangle
			\nonumber
			\\
			=&~ \sum_{j=1}^{n} \E \left\langle \nabla f_j(\bw^{s-\tau_j(s)}; \bxi_j^{s-d_j(s)}) - \nabla F_j(\bw^{s-\tau_j(s)}) , \nabla f_j(\bw^{s'-\tau_j(s')}; \bxi_j^{s'-d_j(s')}) - \nabla F_j(\bw^{s'-\tau_j(s')}) \right\rangle
			\nonumber
			\\
			&~+ \sum_{i,j: i \neq j} \underbrace{\E \left\langle \nabla f_i(\bw^{s-\tau_i(s)}; \bxi_i^{s-d_i(s)}) - \nabla F_i(\bw^{s-\tau_i(s)}) , \nabla f_j(\bw^{s'-\tau_j(s')}; \bxi_j^{s'-d_j(s')}) - \nabla F_j(\bw^{s'-\tau_j(s')}) \right\rangle}_{Z^{ss'}_{ij}}
			\label{eq:Z}
		\end{align}
		To simplify $Z^{ss'}_{ij}$ for all $i,j \in [n]$ and $i \neq j$, we assume with out loss of generality that $s-d_i(s) \geq s'-d_j(s')$. This, together with the  fact that $d_j(s') \leq \tau_j(s')$ by \eqref{eq:tau-d}, implies that $s-d_i(s) \geq s' - \tau_j(s')$.
		Thus, $\bxi_j^{s-d_i(s)}$ is independent of $\bw^{s'-d_j(s')}$.
		Further using the law of total expectation and Assumption \ref{as:unbiased}, we have
		\begin{align}
			Z^{ss'}_{ij} =&~ \E \left[ \E_{\bxi_i^{s-d_i(s)}} \left\langle \nabla f_i(\bw^{s-\tau_i(s)}; \bxi_j^{s-d_i(s)}) - \nabla F_i(\bw^{s-\tau_i(s)}) , \nabla f_j(\bw^{s'-\tau_i(s')}; \bxi_j^{s'-d_j(s')}) - \nabla F_j(\bw^{s'-\tau_i(s')}) \right\rangle \right]
			\nonumber
			\\
			=&~\E \left[ \left\langle \E_{\bxi_i^{s-d_i(s)}} \left[ \nabla f_i(\bw^{s-\tau_i(s)}; \bxi_i^{s-d_i(s)}) - \nabla F_i(\bw^{s-\tau_i(s)}) \right], 
			\nabla f_j(\bw^{s'-\tau_i(s')}; \bxi_j^{s'-d_j(s')}) - \nabla F_j(\bw^{s'-\tau_i(s')}) \right\rangle \right]
			\nonumber
			\\
			=&~ 0
			\nonumber
		\end{align}
		Substituting this back into \eqref{eq:Z} and using Assumption \ref{as:sigma}, we have
		\begin{align}
			Z^{ss'} =&~ \sum_{j=1}^{n} \E \left\langle \nabla f_j(\bw^{s-\tau_j(s)}; \bxi_j^{t-d_j(s)}) - \nabla F_j(\bw^{s-\tau_j(s)}) , \nabla f_j(\bw^{s'-\tau_i(s')}; \bxi_j^{s'-d_j(s')}) - \nabla F_j(\bw^{s'-\tau_i(s')}) \right\rangle
			\nonumber
			\\
			\leq&~ \frac{1}{2} \sum_{j=1}^{n} \E \| \nabla f_j(\bw^{s-\tau_j(s)}; \bxi_j^{t-d_j(s)}) - \nabla F_j(\bw^{s-\tau_j(s)}) \|_2^2
			\nonumber
			\\
			&~ + \frac{1}{2} \sum_{j=1}^{n}  \E \| \nabla f_j(\bw^{s'-\tau_i(s')}; \bxi_j^{s'-d_j(s')}) - \nabla F_j(\bw^{s'-\tau_i(s')}) \|_2^2 
			\nonumber
			\\
			\leq&~ n \sigma^2.
			\nonumber
		\end{align}
		Plugging this back into \eqref{eq:Y1} and using Lemma \ref{lem:variance} yield
		\begin{align}
			\Phi_1&\leq \sum_{s = 1+t-\tau_i(t)}^{t} n \sigma^2 + \sum_{\substack{s,s': s \neq s', \\ 1+t-\tau_i(t) \leq s,s' \leq t}} n \sigma^2 
			\nonumber
			\\
			&= \tau_i(t) n \sigma^2 + (\tau_i(t)^2-\tau_i(t)) n \sigma^2 
			\nonumber
			\\
			&= \tau_i(t)^2 n \sigma^2 
			\nonumber
			\\
			&\leq n \tau_{\max}^2 \sigma^2.
			\label{eq:Phi1}
		\end{align}
		
		\underline{Upper bounding $\Phi_2$:} Following the fact that $\|\sum_{i=1}^{m} \bx_i \|_2^2 \leq m \sum_{i=1}^{m} \| \bx_i \|_2^2$ for vectors $\bx_1,\dots,\bx_m$, we have 
		\begin{align}
			\Phi_2 &= \E \left\| \sum_{s = 1+t-\tau_i(t)}^{t} \sum_{j=1}^{n} \nabla F_j(\bw^{s-\tau_j(s)}) \right\|_2^2
			\nonumber
			\\
			&\leq \tau_i(t) \sum_{s = 1+t-\tau_i(t)}^{t} \E \left\| \sum_{j=1}^{n} \nabla F_j(\bw^{s-\tau_j(s)}) \right\|_2^2.
			\nonumber
			\\
			&\leq \tau_{\max} \sum_{s=1+[t-\tau_{\max}]_+}^{t} \E \left\| \sum_{j=1}^{n} \nabla F_j(\bw^{s-\tau_j(s)}) \right\|_2^2.
			\label{eq:Y2}
		\end{align}
		Substituting \eqref{eq:Phi1} and \eqref{eq:Y2} back into \eqref{eq:diff-tmp} gives
		\begin{align}
			\E \| \bw^t - \bw^{t-\tau_i(t)} \|_2^2
			\leq \frac{2 \sigma^2}{n} \tau_{\max}^2 \eta^2 + 2 \tau_{\max} \eta^2 \sum_{s=1+[t-\tau_{\max}]_+}^{t} \E \left\| \frac{1}{n} \sum_{j=1}^{n} \nabla F_j(\bw^{s-\tau_j(s)}) \right\|_2^2,
			\nonumber
		\end{align}
		as desired.
	\end{proof}
	
	\begin{lemma}\label{lem:bb}
		Suppose that Assumptions \ref{as:function}--\ref{as:sigma} hold. Then, it holds for all $i \in [n]$ and $t \geq 1$ that
		\begin{align}
			\E \| \bm{g}^t \|_2^2
			\leq& \left( 2 + 8 L^2 \tau_{\max}^2 \eta^2 \right)\frac{\sigma^2}{n}
			+ 4 \E \| \nabla F (\bw^{t-1}) \|_2^2
			\nonumber
			\\
			&+ 8 L^2 \tau_{\max} \eta^2 \sum_{s=1+[t-\tau_{\max}]_+}^{t} \E \left\| \frac{1}{n} \sum_{j=1}^{n} \nabla F_j(\bw^{s-\tau_j(s)}) \right\|_2^2.
			\nonumber
		\end{align}
	\end{lemma}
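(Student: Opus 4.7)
The plan is to decompose $\bg^t$ into its ``noise'' and ``mean'' components, apply $\|a+b\|_2^2 \le 2\|a\|_2^2+2\|b\|_2^2$ twice to isolate the three terms that must appear on the right-hand side, and then invoke \Cref{lem:variance} together with \Cref{lem:diff} to control the resulting quantities.

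First, I would write
\[
\bg^t = \underbrace{\frac{1}{n}\sum_{i=1}^n \bigl(\nabla f_i(\bw^{t-\tau_i(t)};\bxi_i^{t-d_i(t)}) - \nabla F_i(\bw^{t-\tau_i(t)})\bigr)}_{A_t} + \underbrace{\frac{1}{n}\sum_{i=1}^n \nabla F_i(\bw^{t-\tau_i(t)})}_{B_t},
\]
so that $\E\|\bg^t\|_2^2 \le 2\E\|A_t\|_2^2 + 2\E\|B_t\|_2^2$. The term $\E\|A_t\|_2^2$ is exactly what \Cref{lem:variance} controls, giving $\E\|A_t\|_2^2 \le \sigma^2/n$, which yields the leading $2\sigma^2/n$ summand in the bound.

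Next I would further split $B_t$ by adding and subtracting $\nabla F(\bw^{t-1}) = \frac{1}{n}\sum_i \nabla F_i(\bw^{t-1})$:
\[
B_t = \nabla F(\bw^{t-1}) + \frac{1}{n}\sum_{i=1}^n \bigl(\nabla F_i(\bw^{t-\tau_i(t)}) - \nabla F_i(\bw^{t-1})\bigr),
\]
and apply $\|a+b\|_2^2\le 2\|a\|_2^2+2\|b\|_2^2$ again. This produces the $4\E\|\nabla F(\bw^{t-1})\|_2^2$ contribution. For the remaining discrepancy term, Jensen's inequality combined with the $L$-smoothness of each $F_i$ (as implied by Assumption~\ref{as:function}) yields
\[
\E\Bigl\|\tfrac{1}{n}\sum_{i=1}^n \bigl(\nabla F_i(\bw^{t-\tau_i(t)})-\nabla F_i(\bw^{t-1})\bigr)\Bigr\|_2^2 \le \frac{L^2}{n}\sum_{i=1}^n \E\|\bw^{t-1}-\bw^{t-\tau_i(t)}\|_2^2.
\]

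Finally, to bound each $\E\|\bw^{t-1}-\bw^{t-\tau_i(t)}\|_2^2$, I would appeal to \Cref{lem:diff} (which is stated for $\|\bw^t-\bw^{t-\tau_i(t)}\|_2^2$, but whose argument goes through unchanged when the telescoping sum stops at $t-1$ and uses $\tau_i(t)-1 \le \tau_{\max}$). Multiplying by the factor $4L^2$ from the two applications of the $2\|\cdot\|^2$ inequality turns the $2\tau_{\max}^2\eta^2\sigma^2/n$ piece into $8L^2\tau_{\max}^2\eta^2\sigma^2/n$ and the $2\tau_{\max}\eta^2\sum_s\|\cdot\|_2^2$ piece into $8L^2\tau_{\max}\eta^2\sum_s\|\cdot\|_2^2$, so that collecting all contributions produces exactly the claimed inequality. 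The only real bookkeeping obstacle is ensuring the constants track correctly through the two rounds of $\|a+b\|_2^2 \le 2\|a\|_2^2+2\|b\|_2^2$, and the only conceptual subtlety is that smoothness must be used at the individual $F_i$ level, which is standard when $F=\frac{1}{n}\sum_i F_i$ is assumed to be $L$-smooth in this kind of analysis.
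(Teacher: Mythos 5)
Your proof follows the paper's argument exactly: the same two-stage decomposition of $\bg^t$ (noise plus mean via $\|a+b\|_2^2\le 2\|a\|_2^2+2\|b\|_2^2$, then recentring the mean at $\nabla F(\bw^{t-1})$), the same use of Lemma~\ref{lem:variance}, the same Jensen-plus-$F_i$-level-smoothness step, and the same invocation of Lemma~\ref{lem:diff}, with all constants tracked identically. The only cosmetic difference is that you measure the iterate drift from $\bw^{t-1}$ (and correctly note that Lemma~\ref{lem:diff}'s argument adapts), whereas the paper writes the difference at $\bw^{t}$ so as to quote Lemma~\ref{lem:diff} verbatim while still keeping $\nabla F(\bw^{t-1})$ in the recentred term; both variants are fine.
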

	\begin{proof}
		Following the fact that $\| \bx + \by \|_2^2 \leq 2 \| \bx \|_2^2 + 2 \| \by \|_2^2$ for vectors $\bx$ and $\by$, we have
		\begin{align}
			&~\E \|\bm{g}^t\|_2^2 
			\nonumber
			\\
			=&~ \E \left\| \frac{1}{n} \sum_{i=1}^{n} \nabla f_i ( \bw^{t-\tau_i(t)}; \bxi_i^{t-d_i(t)} ) \right\|_2^2
			\nonumber
			\\
			=&~ \E \left\| \frac{1}{n} \sum_{i=1}^{n} \left(\nabla f_i( \bw^{t-\tau_i(t)}; \bxi_i^{t-d_i(t)} ) - \nabla F_i( \bw^{t-\tau_i(t)} ) \right) 
			+ \frac{1}{n} \sum_{i=1}^{n} \nabla F_i( \bw^{t-\tau_i(t)} ) \right\|_2^2
			\nonumber
			\\
			\leq&~ 2 \E \left\| \frac{1}{n} \sum_{i=1}^{n} \left(\nabla f_i( \bw^{t-\tau_i(t)}; \bxi_i^{t-d_i(t)} ) - \nabla F_i( \bw^{t-\tau_i(t)} ) \right)   \right\|_2^2
			+ 2 \E \left\| \frac{1}{n} \sum_{i=1}^{n} \nabla F_i( \bw^{t-\tau_i(t)} ) \right\|_2^2
			\nonumber
			\\
			\leq&~ \frac{2\sigma^2}{n} 
			+ 2 \underbrace{\E \left\| \frac{1}{n} \sum_{i=1}^{n} \nabla F_i( \bw^{t-\tau_i(t)} ) \right\|_2^2}_{\Psi}.
			\label{eq:XYZ}
		\end{align}
		where the last inequality holds due to Lemma \ref{lem:variance}.
		It suffices to upper bound $\Psi$. We observe that
		\begin{align}
			\Psi
			&= \E \left\| \frac{1}{n} \sum_{i=1}^{n} \nabla F_i( \bw^{t-\tau_i(t)} ) \right\|_2^2
			\nonumber
			\\
			&= \E \left\| \frac{1}{n} \sum_{i=1}^{n} \left( \nabla F_i( \bw^{t-\tau_i(t)} ) - \nabla F_i( \bw^t ) \right) 
			+ \frac{1}{n} \sum_{i=1}^{n} \nabla F_i(\bw^{t-1}) \right\|_2^2
			\nonumber
			\\
			&\leq 2 \E \left\| \frac{1}{n} \sum_{i=1}^{n} \left( \nabla F_i( \bw^{t-\tau_i(t)} ) - \nabla F_i( \bw^t ) \right) \right\|_2^2
			+ 2 \E \left\| \frac{1}{n} \sum_{i=1}^{n} \nabla F_i(\bw^{t-1}) \right\|_2^2
			\nonumber
			\\
			&\leq \frac{2}{n} \sum_{i=1}^{n} \E \| \nabla F_i( \bw^{t-\tau_i(t)} ) - \nabla F_i( \bw^t ) \|_2^2
			+ 2 \E \| \nabla F (\bw^{t-1}) \|_2^2
			\nonumber
			\\
			&\leq \frac{2 L^2}{n} \sum_{i=1}^{n} \E \| \bw^{t-\tau_i(t)} - \bw^t \|_2^2
			+ 2 \E \| \nabla F (\bw^{t-1}) \|_2^2,
			\label{eq:X2}
		\end{align}
		where the second inequality uses the fact that $\|\sum_{i=1}^{m} \bx_i \|_2^2 \leq m \sum_{i=1}^{m} \| \bx_i \|_2^2$ for vectors $\bx_1,\dots,\bx_m$.
		Substituting Lemma \ref{lem:diff} into \eqref{eq:X2} gives
		\begin{align}
			\Psi &\leq 2 L^2 \left( \frac{2 \sigma^2}{n} \tau_{\max}^2 \eta^2 + 2 \tau_{\max} \eta^2 \sum_{s=1+[t-\tau_{\max}]_+}^{t} \E \left\| \frac{1}{n} \sum_{j=1}^{n} \nabla F_j(\bw^{s-\tau_j(s)}) \right\|_2^2 \right)
			+ 2 \E \| \nabla F (\bw^{t-1}) \|_2^2
			\nonumber
			\\
			&= \frac{4 \sigma^2}{n} L^2 \tau_{\max}^2 \eta^2 
			+ 4 L^2 \tau_{\max} \eta^2 \sum_{s=1+[t-\tau_{\max}]_+}^{t} \E \left\| \frac{1}{n} \sum_{j=1}^{n} \nabla F_j(\bw^{s-\tau_j(s)}) \right\|_2^2
			+ 2 \E \| \nabla F (\bw^{t-1}) \|_2^2.
			\label{eq:Psi}
		\end{align}
		Plugging \eqref{eq:Psi} back into \eqref{eq:XYZ} gives
		\begin{align}
			\E \| \bm{g}^t \|_2^2
			\leq& \left( 2 + 8 L^2 \tau_{\max}^2 \eta^2 \right)\frac{\sigma^2}{n}
			+ 4 \E \| \nabla F (\bw^{t-1}) \|_2^2
			\nonumber
			\\
			&+ 8 L^2 \tau_{\max} \eta^2 \sum_{s=1+[t-\tau_{\max}]_+}^{t} \E \left\| \frac{1}{n} \sum_{j=1}^{n} \nabla F_j(\bw^{s-\tau_j(s)}) \right\|_2^2,
			\nonumber
		\end{align}
		as desired.
	\end{proof}
	
	\subsection{Proof of Proposition \ref{prop:cc}}
	\label{appen:prop}
	\begin{proof}
		We first decompose the inner product into two terms:
		\begin{align}
			\E \langle \nabla F(\bw^{t-1}), \bg^t \rangle
			= \underbrace{\E \left\langle \nabla F(\bw^{[t-\tau_{\max}]_+}), \bg^t \right\rangle}_{A}
			+ \underbrace{\E \left\langle \nabla F (\bw^{t-1}) - \nabla F(\bw^{[t-\tau_{\max}]_+}), \bg^t \right\rangle}_{B}.
			\label{eq:A+B}
		\end{align}
		Subsequently, we upper bound $A_1$ and $A_2$, respectively.
		
		\underline{Lower bounding $A$:} 
		Since $d_i(t) \leq \tau_i(t) \leq \tau_{\max}$ for all $i \in [n]$, then $t - d_i(t) \geq t - \tau_{\max}$ for all $i \in [n]$, which implies that $\bxi_1^{t-d_1(t)},\dots,\bxi_n^{t-d_n(t)}$ are independent of $\bw^{[t-\tau_{\max}]_+}$. Then, we have
		\begin{align}
			A =&~ \E \left[ \left\langle \nabla F(\bw^{[t-\tau_{\max}]_+}), \bg^t \right\rangle \right]
			\nonumber
			\\
			\stackrel{(a)}{=}&~ \E \left[ \E_{\bxi_1^{t-d_1(t)},\dots,\bxi_n^{t-d_n(t)}} \left[ \left\langle \nabla F(\bw^{[t-\tau_{\max}]_+}), \bg^t \right\rangle \right] \right]
			\nonumber
			\\
			=&~\E \left\langle \nabla F(\bw^{[t-\tau_{\max}]_+}), \E_{\bxi_i^{t-d_i(t)}} \left[ \frac{1}{n} \sum_{i=1}^{n} \nabla f_i (\bw^{t-\tau_i(t)}; \bxi_i^{t-d_i(t)}) \right] \right\rangle 
			\nonumber
			\\
			\stackrel{(b)}{=}&~ \E \left\langle \nabla F(\bw^{[t-\tau_{\max}]_+}), \frac{1}{n} \sum_{i=1}^{n} \nabla F_i (\bw^{t-\tau_i(t)}) \right\rangle 
			\nonumber
			\\
			=&~ \underbrace{\E \left\langle \nabla F(\bw^{[t-\tau_{\max}]_+}) - \nabla F (\bw^{t-1}), \frac{1}{n} \sum_{i=1}^{n} \nabla F_i (\bw^{t-\tau_i(t)}) \right\rangle}_{A_1} 
			\nonumber
			\\
			&+ \underbrace{\E \left\langle \nabla F (\bw^{t-1}), \frac{1}{n} \sum_{i=1}^{n} \nabla F_i (\bw^{t-\tau_i(t)}) \right\rangle}_{A_2},
			\label{eq:A1+A2}
		\end{align}
		where $(a)$ use the law of total expectation, and $(b)$ holds due to Assumption \ref{as:unbiased}.
		Then, we lower bound $A_1$ as follows:
		\begin{align}
			A_1 =&~ \E \left\langle \nabla F(\bw^{[t-\tau_{\max}]_+}) - \nabla F (\bw^{t-1}), \frac{1}{n} \sum_{i=1}^{n} \left( \nabla F_i (\bw^{t-\tau_i(t)}) - \nabla F_i (\bw^{t-1}) \right) \right\rangle
			\nonumber
			\\
			&+  \E \left\langle \nabla F(\bw^{[t-\tau_{\max}]_+}) - \nabla F (\bw^{t-1}), \nabla F (\bw^{t-1}) \right\rangle
			\nonumber
			\\
			\geq& - \frac{1}{2} \E \| \nabla F(\bw^{[t-\tau_{\max}]_+}) - \nabla F (\bw^{t-1}) \|_2^2
			- \frac{1}{2} \E \left\|  \frac{1}{n} \sum_{i=1}^{n} \left( \nabla F_i (\bw^{t-\tau_i(t)}) - \nabla F_i (\bw^{t-1}) \right) \right\|_2^2
			\nonumber
			\\
			& - \E \| \nabla F(\bw^{[t-\tau_{\max}]_+}) - \nabla F (\bw^{t-1}) \|_2^2
			- \frac{1}{4} \E \| \nabla F (\bw^{t-1}) \|_2^2
			\nonumber
			\\
			=& - \frac{1}{4} \E \| \nabla F (\bw^{t-1}) \|_2^2
			- \frac{3}{2} \E \| \nabla F(\bw^{[t-\tau_{\max}]_+}) - \nabla F (\bw^{t-1}) \|_2^2
			\nonumber
			\\
			&- \frac{1}{2} \E \left\|  \frac{1}{n} \sum_{i=1}^{n} \left( \nabla F_i (\bw^{t-\tau_i(t)}) - \nabla F_i (\bw^{t-1}) \right) \right\|_2^2,
			\label{eq:A1}
		\end{align}
		where the inequality uses the fact that $\langle \bx, \by \rangle \geq - \frac{1}{2} \|\bx\|_2^2 - \frac{1}{2} \|\by\|_2^2$ and $\langle \bx, \by \rangle \geq - \|\bx\|_2^2 - \frac{1}{4} \|\by\|_2^2$ for vectors $\bx$ and $\by$.
		Using the identity $\langle \bx, \by \rangle = \frac{1}{2} \| \bx \|_2^2 + \frac{1}{2} \| \by \|_2^2 - \frac{1}{2} \| \bx - \by \|_2^2$ for vectors $\bx$ and $\by$, we can express $A_2$ as
		\begin{align}
			A_2 =&~ \frac{1}{2} \E \| \nabla F (\bw^{t-1}) \|_2^2
			+ \frac{1}{2} \E \left\| \frac{1}{n} \sum_{i=1}^{n} \nabla F_i (\bw^{t-\tau_i(t)}) \right\|_2^2
			\nonumber
			\\
			&- \frac{1}{2} \E \left\| \nabla F (\bw^{t-1}) - \frac{1}{n} \sum_{i=1}^{n} \nabla F_i (\bw^{t-\tau_i(t)}) \right\|_2^2.
			\label{eq:A2}
		\end{align}
		Putting \eqref{eq:A1} and \eqref{eq:A2} back into \eqref{eq:A1+A2} gives
		\begin{align}
			A \geq&~ \frac{1}{4} \E \| \nabla F (\bw^{t-1}) \|_2^2
			- \frac{3}{2} \E \| \nabla F (\bw^{t-1}) - \nabla F(\bw^{[t-\tau_{\max}]_+}) \|_2^2
			\nonumber
			\\
			& - \E \left\| \frac{1}{n} \sum_{i=1}^{n} \left( \nabla F_i(\bw^{t-1}) - \nabla F_i (\bw^{t-\tau_i(t)}) \right) \right\|_2^2
			+ \frac{1}{2} \E \left\| \frac{1}{n} \sum_{i=1}^{n} \nabla F_i (\bw^{t-\tau_i(t)}) \right\|_2^2
			\nonumber
			\\
			\stackrel{(a)}{=}&~ \frac{1}{4} \E \| \nabla F (\bw^{t-1}) \|_2^2
			- \frac{3L^2}{2} \E \| \bw^t - \bw^{[t-\tau_{\max}]_+} \|_2^2
			\nonumber
			\\
			& - \frac{L^2}{n} \sum_{i=1}^{n} \E \| \bw^t - \bw^{t-\tau_i(t)} \|_2^2
			+ \frac{1}{2} \E \left\| \frac{1}{n} \sum_{i=1}^{n} \nabla F_i (\bw^{t-\tau_i(t)}) \right\|_2^2
			\nonumber
			\\
			\stackrel{(b)}{\geq}&~ \frac{1}{4} \E \| \nabla F (\bw^{t-1}) \|_2^2
			+ \frac{1}{2} \E \left\| \frac{1}{n} \sum_{i=1}^{n} \nabla F_i (\bw^{t-\tau_i(t)}) \right\|_2^2
			\nonumber
			\\
			& - \frac{5L^2}{2} \left( 2 \tau_{\max}^2 \eta^2 \frac{\sigma^2}{n} 
			+ 2 \tau_{\max} \eta^2 \sum_{s=1+[t-\tau_{\max}]_+}^{t} \E \left\| \frac{1}{n} \sum_{j=1}^{n} \nabla F_j (\bw^{s-\tau_j(s)}) \right\|_2^2 \right)
			\nonumber
			\\
			=&~ \frac{1}{4} \E \| \nabla F (\bw^{t-1}) \|_2^2
			- 5 L^2 \tau_{\max}^2 \eta^2 \frac{\sigma^2}{n}
			+ \frac{1}{2} \E \left\| \frac{1}{n} \sum_{i=1}^{n} \nabla F_i (\bw^{t-\tau_i(t)}) \right\|_2^2
			\nonumber
			\\
			&- 5 L^2 \tau_{\max} \eta^2 \sum_{s=1+[t-\tau_{\max}]_+}^{t} \E \left\| \frac{1}{n} \sum_{j=1}^{n} \nabla F_j (\bw^{s-\tau_j(s)}) \right\|_2^2,
			\label{eq:A-lb}
		\end{align}
		where $(a)$ uses Assumption \ref{as:function} and $(b)$ uses Lemma \ref{lem:diff}.
		
		\underline{Lower bounding $B$:}
		We observe that
		\begin{align}
			B &= \E \left\langle \nabla F (\bw^{t-1}) - \nabla F(\bw^{[t-\tau_{\max}]_+}), \bg^t \right\rangle
			\nonumber
			\\
			&\stackrel{(a)}{\geq} - \E \left[ \| \nabla F (\bw^{t-1}) - \nabla F(\bw^{[t-\tau_{\max}]_+}) \|_2 \| \bg^t \|_2 \right]
			\nonumber
			\\
			&\stackrel{(b)}{\geq} - L \E \left[ \| \bw^t - \bw^{[t-\tau_{\max}]_+} \|_2 \| \bg^t \|_2 \right]
			\nonumber
			\\
			&\stackrel{(c)}{\geq} - L \E \left[ \left\| \sum_{s=1+[t-\tau_{\max}]_+}^{t} \eta \bg^s \right\|_2 \| \bg^t \|_2 \right]
			\nonumber
			\\
			&\stackrel{(d)}{\geq} - L \E \left[ \sum_{s=1+[t-\tau_{\max}]_+}^{t} \eta \| \bg^s \|_2 \| \bg^t \|_2 \right]
			\nonumber
			\\
			&\stackrel{(e)}{\geq} - L \eta \sum_{s=1+[t-\tau_{\max}]_+}^{t} \frac{1}{2} \left( \E \| \bg^s \|_2^2 + \E \| \bg^t \|_2^2 \right)
			\nonumber
			\\
			&= - \frac{L \eta}{2} \sum_{s=1+[t-\tau_{\max}]_+}^{t}  \E \| \bg^s \|_2^2 
			- \frac{L\eta}{2} \tau_{\max} \E \| \bg^t \|_2^2,
			\label{eq:B}
		\end{align}
		where $(a)$ follows from the Cauchy-Schwartz inequality, $(b)$ follows from Assumption \ref{as:function}, $(c)$ uses the telescoping sum 
		$\bw^t - \bw^{[t-\tau_{\max}]_+}
		= \sum_{s = 1+[t-\tau_{\max}]_+}^{t} \left( \bw^s - \bw^{s-1} \right)
		= \sum_{s = 1+[t-\tau_{\max}]_+}^{t} \eta \bg^s$,
		$(d)$ uses the triangle inequality, and $(e)$ is due to the Young's inequality.
		Combining \eqref{eq:B} with Lemma \ref{lem:bb}, we have
		\begin{align}
			B \geq&- \frac{L \eta}{2} \sum_{s=1+[t-\tau_{\max}]_+}^{t} \left( \left( 2 + 8 L^2 \tau_{\max}^2 \eta^2 \right)\frac{\sigma^2}{n}
			+ 4 \E \| \nabla F(\bw^{s-1}) \|_2^2 \right.
			\nonumber
			\\
			&~\left. \qquad\qquad\qquad\qquad\qquad + 8 L^2 \tau_{\max} \eta^2 \sum_{s'=1+[s-\tau_{\max}]_+}^{s} \E \left\| \frac{1}{n} \sum_{j=1}^{n} \nabla F_j(\bw^{s'-\tau_i(s')}) \right\|_2^2
			\right)
			\nonumber
			\\
			& - \frac{L \eta}{2} \tau_{\max} \left( 
			\left( 2 + 8 L^2 \tau_{\max}^2 \eta^2 \right)\frac{\sigma^2}{n}
			+ 4 \E \| \nabla F (\bw^{t-1}) \|_2^2 \right.
			\nonumber
			\\
			&\left. \qquad\qquad\qquad\quad + 8 L^2 \tau_{\max} \eta^2 \sum_{s=1+[t-\tau_{\max}]_+}^{t} \E \left\| \frac{1}{n} \sum_{j=1}^{n} \nabla F_j(\bw^{s-\tau_j(s)}) \right\|_2^2
			\right)
			\nonumber
			\\
			=& - (2L\tau_{\max}\eta + 8 L^3 \tau_{\max}^3 \eta^3) \frac{\sigma^2}{n}
			- 2 L \eta \sum_{s=1+[t-\tau_{\max}]_+}^{t} \E \| \nabla F(\bw^{s-1}) \|_2^2
			- 2 L \tau_{\max} \eta \E \| \nabla F (\bw^{t-1}) \|_2^2 
			\nonumber
			\\
			&- 4 L^3 \tau_{\max} \eta^3 \sum_{s=1+[t-\tau_{\max}]_+}^{t} \sum_{s'=1+[s-\tau_{\max}]_+}^{s} \E \left\| \frac{1}{n} \sum_{j=1}^{n} \nabla F_j(\bw^{s'-\tau_i(s')}) \right\|_2^2
			\nonumber
			\\
			& - 4 L^3 \tau_{\max}^2 \eta^3 \sum_{s=1+[t-\tau_{\max}]_+}^{t} \E \left\| \frac{1}{n} \sum_{j=1}^{n} \nabla F_j(\bw^{s-\tau_j(s)}) \right\|_2^2 
			\nonumber
			\\
			\geq& - (2L\tau_{\max}\eta + 8 L^3 \tau_{\max}^3 \eta^3) \frac{\sigma^2}{n}  
			- 2 L \eta \sum_{s=1+[t-\tau_{\max}]_+}^{t} \E \| \nabla F(\bw^{s-1}) \|_2^2
			- 2 L \tau_{\max} \eta \E \| \nabla F (\bw^{t-1}) \|_2^2 
			\nonumber
			\\
			& - 8 L^3 \tau_{\max}^2 \eta^3 \sum_{s=1+[t-2\tau_{\max}]_+}^{t} \E \left\| \frac{1}{n} \sum_{j=1}^{n} \nabla F_j(\bw^{s-\tau_j(s)}) \right\|_2^2,
			\label{eq:B-lb}
		\end{align}
		where the last inequality uses the fact that
		$\sum_{s=1+[t-K]_+}^{t} \sum_{s'=1+[s-K]_+}^{s} a_{s'} \leq K \sum_{s=1+[t-2K]_+}^{t} a_s$
		for $a_1,\dots,a_t \geq 0$ and $K\geq1$.
		
		Substituting \eqref{eq:A-lb} and \eqref{eq:B-lb} into \eqref{eq:A+B} and simplifying it, we have
		\begin{align}
			&~\E \langle \nabla F (\bw^{t-1}), \bg^t \rangle
			\nonumber
			\\
			\geq& \left(\frac{1}{4} - 2 L \tau_{\max} \eta\right) \E \| \nabla F (\bw^{t-1}) \|_2^2 
			- 2 L \eta \sum_{s=1+[t-\tau_{\max}]_+}^{t} \E \| \nabla F(\bw^{s-1}) \|_2^2 
			\nonumber
			\\
			&- (2 L \tau_{\max} \eta + 5 L^2 \tau_{\max}^2 \eta^2 + 8 L^3 \tau_{\max}^3 \eta^3) \frac{\sigma^2}{n}
			+ \frac{1}{2} \E \left\| \frac{1}{n} \sum_{i=1}^{n} \nabla F_i (\bw^{t-\tau_i(t)}) \right\|_2^2
			\nonumber
			\\
			& - (5 L^2 \tau_{\max} \eta^2 + 8 L^3 \tau^2_{\max} \eta^3) \sum_{s=1+[t-2\tau_{\max}]_+}^{t} \E \left\| \frac{1}{n} \sum_{j=1}^{n} \nabla F_j (\bw^{s-\tau_j(s)}) \right\|_2^2
			\nonumber
			\\
			\geq&~ \frac{1}{8} \E \| \nabla F (\bw^{t-1}) \|_2^2 
			- 2 L \eta \sum_{s=1+[t-\tau_{\max}]_+}^{t} \E \| \nabla F(\bw^{s-1}) \|_2^2 
			- 3 L \tau_{\max} \eta \frac{\sigma^2}{n}
			\nonumber
			\\
			&~ 
			+ \frac{1}{2} \E \left\| \frac{1}{n} \sum_{i=1}^{n} \nabla F_i (\bw^{t-\tau_i(t)}) \right\|_2^2
			- 6 L^2 \tau_{\max} \eta^2 \sum_{s=1+[t-2\tau_{\max}]_+}^{t} \E \left\| \frac{1}{n} \sum_{j=1}^{n} \nabla F_j (\bw^{s-\tau_j(s)}) \right\|_2^2,
			\nonumber
		\end{align}
		where the last inequality holds because the stepsize condition $\eta \leq 1/(16 L \tau_{\max})$ implies the following:
		\begin{align}
			\frac{1}{4} - 2 L \tau_{\max} \eta &\geq \frac{1}{8},
			\nonumber
			\\
			2 L \tau_{\max} \eta + 5 L^2 \tau_{\max}^2 \eta^2 + 8 L^3 \tau_{\max}^3 \eta^3 &\leq 2 L \tau_{\max} \eta + 6 L^2 \tau_{\max}^2 \eta^2 \leq 3 L \tau_{\max} \eta,
			\nonumber
			\\
			5 L^2 \tau_{\max} \eta^2 + 8 L^3 \tau_{\max}^2 \eta^3
			&\leq 6 L^2 \tau_{\max} \eta^2.
			\nonumber
		\end{align}
		This completes the proof.
	\end{proof}
	
	\subsection{Proof of Theorem \ref{thm}}
	\label{appen:thm}
	\begin{proof}
		Since $F$ is $L$-smooth, it follows from the descent lemma that
		\begin{align}
			\E[F(\bw^t)]-\E[F(\bw^{t-1})] 
			\leq&~ \E [ \langle \nabla F(\bw^{t-1}), \bw^t - \bw^{t-1} \rangle ] + \frac{L}{2} \E \| \bw^t - \bw^{t-1} \|_2^2
			\nonumber
			\\
			=& - \eta \E \langle \nabla F(\bw^{t-1}), \bg^t \rangle + \frac{L\eta^2}{2} \E \| \bg^t \|_2^2.
			\nonumber
		\end{align}
		Applying Lemma \ref{lem:bb} and Proposition \ref{prop:cc}, we obtain
		\begin{align}
			&~\E[F(\bw^t)]-\E[F(\bw^{t-1})] 
			\nonumber
			\\
			\leq& - \frac{1}{8}\eta \E \| \nabla F (\bw^{t-1}) \|_2^2 
			+ 2 L \eta^2 \sum_{s=1+[t-\tau_{\max}]_+}^{t} \E \| \nabla F(\bw^{s-1}) \|_2^2 
			+ 3 L \tau_{\max} \eta^2 \frac{\sigma^2}{n}
			\nonumber
			\\
			& - \frac{1}{2} \eta \E \left\| \frac{1}{n} \sum_{i=1}^{n} \nabla F_i (\bw^{t-\tau_i(t)}) \right\|_2^2
			+ 6 L^2 \tau_{\max} \eta^3 \sum_{s=1+[t-2\tau_{\max}]_+}^{t} \E \left\| \frac{1}{n} \sum_{j=1}^{n} \nabla F_j (\bw^{s-\tau_j(s)}) \right\|_2^2
			\nonumber
			\\
			&+ \left( L \eta^2 + 4 L^3 \tau_{\max}^2 \eta^4 \right)\frac{\sigma^2}{n}
			+ 4 L^3 \tau_{\max} \eta^4 \sum_{s=1+[t-\tau_{\max}]_+}^{t} \E \left\| \frac{1}{n} \sum_{j=1}^{n} \nabla F_j(\bw^{s-\tau_j(s)}) \right\|_2^2
			\nonumber
			\\
			&+ 2 L \eta^2 \E \| \nabla F (\bw^{t-1}) \|_2^2
			\nonumber
			\\
			\leq& - \left( \frac{1}{8} \eta - 2 L \eta^2 \right) \E \| \nabla F (\bw^{t-1}) \|_2^2  
			+ 2 L \eta^2 \sum_{s=1+[t-\tau_{\max}]_+}^{t} \E \| \nabla F(\bw^{s-1}) \|_2^2
			\nonumber
			\\
			&+ (L \eta^2 + 3 L \tau_{\max} \eta^2 + 4 L^3 \tau_{\max}^2  \eta^4) \frac{\sigma^2}{n}
			- \frac{1}{2} \eta \E \left\| \frac{1}{n} \sum_{i=1}^{n} \nabla F_i(\bw^{t-\tau_i(t)}) \right\|_2^2
			\nonumber
			\\
			&+ (6 L^2 \tau_{\max} \eta^3 + 4 L^3 \tau_{\max} \eta^4) \sum_{s=1+[t-2\tau_{\max}]_+}^{t} \E \left\| \frac{1}{n} \sum_{j=1}^{n} \nabla F_j(\bw^{s-\tau_j(s)}) \right\|_2^2
			\nonumber
			\\
			\leq&  - \frac{1}{16} \eta \E \| \nabla F (\bw^{t-1}) \|_2^2 
			+ 2 L \eta^2 \sum_{s=1+[t-\tau_{\max}]_+}^{t} \E \| \nabla F(\bw^{s-1}) \|_2^2
			\nonumber
			\\
			& + (4 L \tau_{\max} \eta^2 + 4 L^3 \tau_{\max}^2 \eta^4) \frac{\sigma^2}{n}
			\nonumber
			\\
			& - \frac{1}{2} \eta \E \left\| \frac{1}{n} \sum_{i=1}^{n} \nabla F_i(\bw^{t-\tau_i(t)}) \right\|_2^2
			+ 7 L^2 \tau_{\max} \eta^3 \sum_{s=1+[t-2\tau_{\max}]_+}^{t} \E \left\| \frac{1}{n} \sum_{j=1}^{n} \nabla F_j(\bw^{s-\tau_j(s)}) \right\|_2^2,
			\label{eq:last}
		\end{align}
		where the last inequality holds because $\tau_{\max} \geq 1 \Rightarrow L \eta^2 + 3 L \tau_{\max} \eta^2 \leq 4 L \tau_{\max} \eta$ and by requiring the following stepsize conditions:
		\begin{align}
			\eta \leq \frac{1}{32 L} 
			\Longleftrightarrow&~
			\frac{1}{8} \eta - 2 L \eta^2 \geq \frac{1}{16} \eta,
			\label{eq:step2}
			\\
			\eta \leq \frac{1}{4 L} 
			\Longrightarrow&~
			6 L^2 \tau_{\max} \eta^3 + 4 L^3 \tau_{\max} \eta^4 \leq 7 L^2 \tau_{\max} \eta^3.
			\label{eq:step3}
		\end{align}
		Summing up both sides of inequality \eqref{eq:last} for $t = 1, \dots, T$ yields
		\begin{align}
			& \E [F(\bw^T)] - F(\bw^0)
			\nonumber
			\\
			\leq& - \frac{1}{16} \eta \sum_{t=1}^{T} \E \| \nabla F (\bw^{t-1}) \|_2^2
			+ 2 L \eta^2 \sum_{t=1}^{T} \sum_{s=1+[t-\tau_{\max}]_+}^{t} \E \| \nabla F(\bw^{s-1}) \|_2^2 
			\nonumber
			\\
			& + \sum_{t=1}^{T} (4 L \tau_{\max} \eta^2 + 4 L^3 \tau_{\max}^2 \eta^4) \frac{\sigma^2}{n}
			\nonumber
			\\
			& - \frac{1}{2} \eta \sum_{t=1}^{T} \E \left\| \frac{1}{n} \sum_{j=1}^{n} \nabla F_j(\bw^{t-\tau_j(t)}) \right\|_2^2 
			+ 7 L^2 \tau_{\max} \eta^3 \sum_{t=1}^{T} \sum_{s=1+[t-2\tau_{\max}]_+}^{t} \E \left\| \frac{1}{n} \sum_{j=1}^{n} \nabla F_j(\bw^{s-\tau_j(s)}) \right\|_2^2
			\nonumber
			\\
			\leq& - \frac{1}{16} \eta \sum_{t=1}^{T} \E \| \nabla F (\bw^{t-1}) \|_2^2
			+ 2 L \tau_{\max} \eta^2 \sum_{t=1}^{T} \E \| \nabla F (\bw^{t-1}) \|_2^2 
			\nonumber
			\\
			& + \sum_{t=1}^{T} (4 L \tau_{\max} \eta^2 + 4 L^3 \tau_{\max}^2 \eta^4) \frac{\sigma^2}{n}
			\nonumber
			\\
			& - \frac{1}{2} \eta \sum_{t=1}^{T} \E \left\| \frac{1}{n} \sum_{j=1}^{n} \nabla F_j(\bw^{t-\tau_j(t)}) \right\|_2^2 
			+ 14 L^2 \tau_{\max}^2 \eta^3 \sum_{t=1}^{T} \E \left\| \frac{1}{n} \sum_{j=1}^{n} \nabla F_j(\bw^{t-\tau_j(t)}) \right\|_2^2
			\nonumber
			\\
			=& - \left( \frac{1}{16} \eta - 2 L \tau_{\max} \eta^2 \right) \sum_{t=1}^{T} \E \| \nabla F (\bw^{t-1}) \|_2^2
			+ \sum_{t=1}^{T} (4 L \tau_{\max} \eta^2 + 4 L^3 \tau_{\max}^2 \eta^4) \frac{\sigma^2}{n}
			\nonumber
			\\
			&- \left( \frac{1}{2} \eta - 14 L^2 \tau_{\max}^2 \eta^3 \right) \sum_{t=1}^{T} \E \left\| \frac{1}{n} \sum_{j=1}^{n} \nabla F_j(\bw^{t-\tau_j(t)}) \right\|_2^2
			\nonumber
			\\
			\leq& - \frac{1}{32} \eta \sum_{t=1}^{T} \E \| \nabla F (\bw^{t-1}) \|_2^2
			+ \sum_{t=1}^{T} (4 L \tau_{\max} \eta^2 + 4 L^3 \tau_{\max}^2 \eta^4) \frac{\sigma^2}{n},
			\label{eq:FT-F0}
		\end{align}
		where the second inequality uses the fact that $\sum_{t=1}^{T} \sum_{s=1+[t-K]_+}^{t} a_{s}
		\leq K \sum_{t=1}^{T} a_t$
		for $a_1,\dots,a_T \geq 0$ and $K \geq 1$, and the last inequality holds by requiring the following stepsize conditions:
		\begin{align}
			\eta \leq \frac{1}{64 L \tau_{\max}} 
			&\Longleftrightarrow
			\frac{1}{16} \eta - 2 L \tau_{\max} \eta^2 \geq \frac{1}{32} \eta,
			\label{eq:step4}
			\\
			\eta \leq \frac{1}{\sqrt{28} L \tau_{\max}} 
			&\Longleftrightarrow \frac{1}{2} \eta - 14 L^2 \tau_{\max}^2 \eta^3 \geq 0.
			\label{eq:step5}
		\end{align}
		Rearranging \eqref{eq:FT-F0} and using Assumption \ref{as:lowerbound}, we obtain 
		\begin{align}
			&\frac{1}{T} \sum_{t=1}^{T} \E \| \nabla F (\bw^{t-1}) \|_2^2 
			\leq \frac{32 (F(\bw^0) - F^*)}{T\eta}  + 128 L \tau_{\max} \eta \frac{\sigma^2}{n}  + 128 L^3 \tau_{\max}^2 \eta^3 \frac{\sigma^2}{n}.
			\label{eq:rearranged}
		\end{align}
		It suffices to choose $\eta$ s.t. the right hind side of \eqref{eq:rearranged} can be minimized.
		To proceed, using the inequality $a + b \geq 2 \sqrt{ab}$ for $a,b \geq 0$, we note that
		\begin{align}
			\frac{32 (F(\bw^0) - F^*)}{T\eta}  + 128 L \tau_{\max} \eta \frac{\sigma^2}{n} \geq 128 \sqrt{\frac{L \sigma^2 \tau_{\max} (F(\bw^0) - F^*)}{nT}},
			\nonumber
		\end{align}
		where the equality holds if and only if
		\begin{align}
			\frac{32 (F(\bw^0) - F^*)}{T\eta} = 128 L \tau_{\max} \eta \frac{\sigma^2}{n}
			\Longleftrightarrow
			\eta = \frac{1}{2}\sqrt{\frac{n(F(\bw^0) - F^*)}{L \sigma^2 \tau_{\max} T}}.
		\end{align}
		Note that the stepsize conditions \eqref{eq:step2}, \eqref{eq:step3}, \eqref{eq:step4}, and \eqref{eq:step5} are implied by $\eta \leq 1/(64 L \tau_{\max})$.
		We take $\eta = \frac{1}{2} \sqrt{\frac{n(F(\bw^0) - F^*)}{L \sigma^2 \tau_{\max} T}}$, then the stepsize conditions can be satisfied when
		\begin{align}
			\frac{1}{2}\sqrt{\frac{n(F(\bw^0) - F^*)}{L \sigma^2 \tau_{\max} T}} \leq \frac{1}{64 L \tau_{\max}}
			\Longleftrightarrow
			T \geq \frac{1024 L (F(\bw^0) - F^*) n \tau_{\max}}{\sigma^2}.
			\label{eq:T-lb}
		\end{align}
		Following from \eqref{eq:rearranged}, when the number of iterations satisfies \eqref{eq:T-lb}, we have
		\begin{align}
			&\frac{1}{T} \sum_{t=1}^{T} \E \| \nabla F (\bw^{t-1}) \|_2^2 
			\nonumber
			\\
			\leq&~ 128 \sqrt{\frac{L \sigma^2 \tau_{\max} (F(\bw^0) - F^*)}{nT}} 
			+ 128 L^3  \tau_{\max}^2 \left(\sqrt{\frac{n(F(\bw^0) - F^*)}{4 L \sigma^2 \tau_{\max} T}}\right)^3 \frac{\sigma^2}{n} 
			\nonumber
			\\
			=&~ 128 \sqrt{\frac{L \sigma^2  \tau_{\max} (F(\bw^0) - F^*)}{nT}} +  \frac{128((F(\bw^0) - F^*) L)^{3/2} \sqrt{n\tau_{\max}}}{\sigma   T^{3/2}},
		\end{align}
		which completes the proof.
	\end{proof}
	
	\section{Additional Experimental Details and Numerical Results}\label{sec:appendix-numerical}
	
	\textbf{Data Partitioning.} Following the approaches adopted in many works \citep{yurochkin2019bayesian,hsu2019measuring,li2022federated}, we use Dirichlet distribution to split the CIFAR-10 dataset into $n$ subsets. 
	The training set in CIFAR-10 consists of 50,000 images with 10 different classes.
	For each class $k \in [10]$, we generate a generate a vector $\bm{p}_k \in \mathbb{R}^n$ from the $n$-dimensional Dirichlet distribution with concentration parameter $\alpha$, whose probability density is given by 
	\[
	\text{Dir}_n(\bm{p}_k; \alpha) \coloneqq \frac{1}{B(\alpha)} \prod_{i=1}^n p_{k,i}^{\alpha - 1}.
	\] 
	Here, $B(\alpha) \coloneqq \frac{\prod_{i=1}^n \Gamma(\alpha)}{\Gamma(n \alpha)}$ is the Beta function, $\Gamma(\cdot)$ is the Gamma function, and $\bm{p}_k$ satisfies $p_{k,i} \in [0,1]$ and $\sum_{i=1}^{n} p_{k,i} = 1$. 
	After generating $\bm{p}_1, \dots, \bm{p}_{10}$, each instance of class $k$ is assigned to client $i$ with probability $p_{k,i}$.
	
	\textbf{Numerical Results for $n=30$ Workers.} 
	We conduct experiments with a configuration of $n=30$ workers. Increasing the number of workers $n$ in the Dirichlet distribution with a given concentration parameter $\alpha$ tends to result in more balanced data partitioning. For our experiments, we select $\alpha = 0.05$ and $0.1$ to observe the effects with $n=30$.
	Each experiment is independently conducted three times using different random seeds. We report the mean and standard deviation of the numerical performance for this configuration, as illustrated in Figure \ref{fig:loss2}.
	We observe that DuDe-ASGD displays similar performance patterns to other algorithms, as previously shown in Figure \ref{fig:loss}, across different levels of data heterogeneity.
	
	\begin{figure}[ht]
		\centering
		\begin{subfigure}[b]{\textwidth}
			\centering
			\includegraphics[width=\textwidth]{figures/legend.pdf}
			\vspace{-7mm}
		\end{subfigure}
		\begin{subfigure}[t]{\textwidth}
			\centering
			\includegraphics[width=\textwidth]{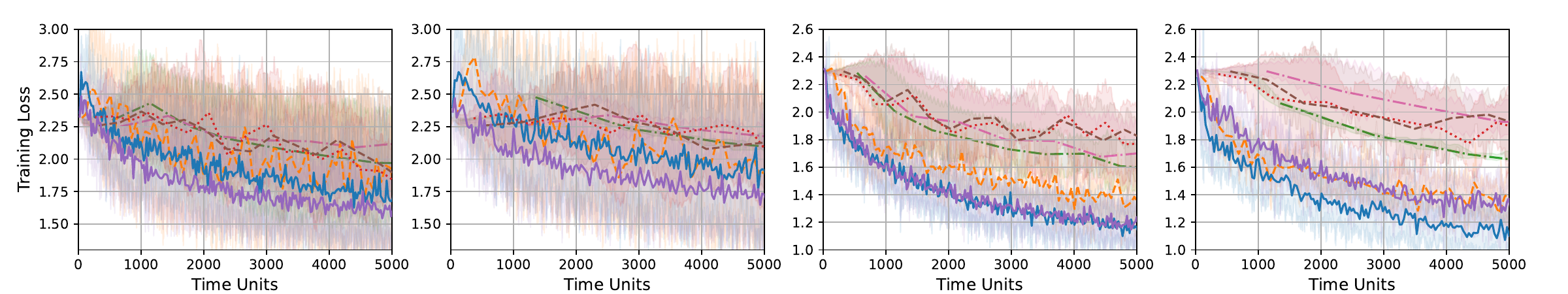}
		\end{subfigure}
		\begin{subfigure}[t]{\textwidth},
			\centering
			\includegraphics[width=\textwidth]{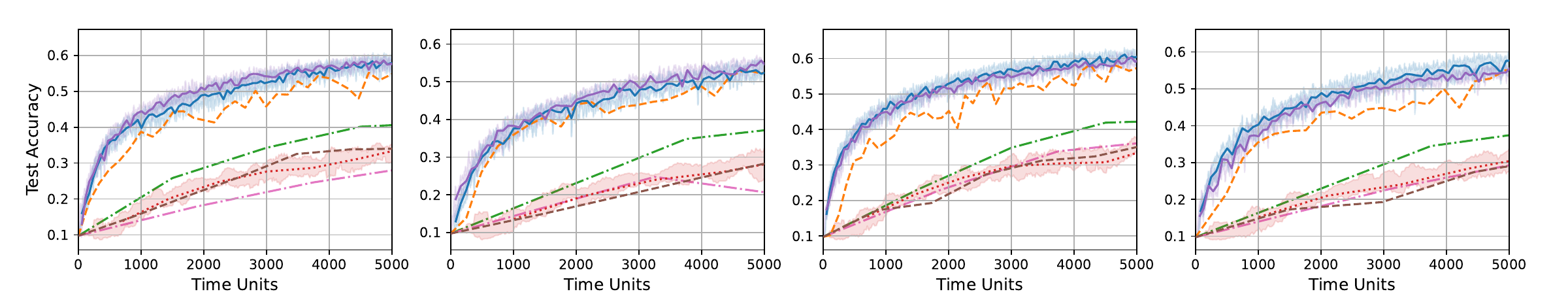}
		\end{subfigure}
		\caption{Convergence curves displaying training losses and test accuracies over time with $n=30$ workers. (1st column: $\alpha=0.05, \texttt{std}=1$; 2nd column: $\alpha=0.05, \texttt{std}=5$; 3rd column: $\alpha=0.1, \texttt{std}=1$; 4th column: $\alpha=0.1, \texttt{std}=5$)
		}
		\label{fig:loss2}
	\end{figure}
	
\end{document}